\def\isarxiv{1}

\ifdefined\isarxiv
\documentclass[11pt]{article}

\else
\documentclass{article} 

\usepackage{microtype}
\usepackage{hyperref}
\usepackage{cleveref}
\usepackage{url}

\fi

\usepackage{wrapfig}
\usepackage{booktabs}
\usepackage{amsmath}
\usepackage{amsthm}
\usepackage{amssymb}
\usepackage[linesnumbered,ruled,algo2e]{algorithm2e}
\usepackage{subfigure}
\usepackage{graphicx}
\usepackage{grffile}
\usepackage{wrapfig,epsfig}
\usepackage{url}
\usepackage{xcolor}
\usepackage{xspace}
\usepackage{epstopdf}

\usepackage{bbm}
\usepackage{dsfont}

\usepackage{multirow}
\usepackage{multicol}
\usepackage{tcolorbox}
\usepackage{enumitem} 
\allowdisplaybreaks

\ifdefined\isarxiv

\usepackage{tikz}
\usepackage{hyperref}  
\usepackage{cleveref}
\hypersetup{colorlinks=true,citecolor=blue,linkcolor=blue}
\usetikzlibrary{arrows}
\usepackage[margin=1in]{geometry}


\else


\fi

\newtheorem{theorem}{Theorem}

\newtheorem{proposition}[theorem]{Proposition}

\ifdefined\isarxiv
\newcommand{\citet}{\cite}
\newcommand{\citep}{\cite}
\else
\renewcommand{\cite}{\citep}
\fi

\usepackage{amsthm,amssymb}
\usepackage{bm}

\newcommand{\myname}{\textsf{Joker}\xspace}
\newcommand{\falkon}{\textsf{Falkon}\xspace}
\newcommand{\logfal}{\textsf{LogFalkon}\xspace}
\newcommand{\eprd}{\textsf{EigenPro3}\xspace}
\newcommand{\myalg}{\text{DBCD-TR}\xspace}
\newcommand{\thsvm}{\textsf{ThunderSVM}\xspace}

\newcommand{\good}[1]{\textcolor[rgb]{0.047,0.639,0.255}{\textbf{#1}}}
\newcommand{\bad}[1]{\textcolor[rgb]{0.8,0.1,0.1}{\textbf{#1}}}
\newcommand{\emm}[1]{\textcolor[rgb]{0.9412,0.5294,0.0941}{\textbf{#1}}}
\newcommand{\upa}{$\uparrow$}
\newcommand{\downa}{$\downarrow$}

\DeclareMathOperator*{\argmin}{argmin}


\newcommand{\Bc}{{\mathcal B}}

\newcommand{\Hc}{{\mathcal H}}
\newcommand{\Ic}{{\mathcal I}}
\newcommand{\Jc}{{\mathcal J}}
\newcommand{\Kc}{{\mathcal K}}
\newcommand{\Mc}{{\mathcal M}}
\newcommand{\Nc}{{\mathcal N}}

\newcommand{\Xc}{{\mathcal X}}
\newcommand{\Yc}{{\mathcal Y}}

\newcommand{\abf}{{\bm a}}
\newcommand{\bbf}{{\bm b}}

\newcommand{\dbf}{{\bm d}}

\newcommand{\g}{{\bm g}}

\newcommand{\rbf}{{\bm r}}
\newcommand{\s}{{\bm s}}

\newcommand{\ubf}{{\bm u}}

\newcommand{\w}{{\bm w}}
\newcommand{\x}{{\bm x}}
\newcommand{\y}{{\bm y}}

\newcommand{\Ibf}{{\bm I}}

\newcommand{\K}{{\bm K}}
\newcommand{\Lbf}{{\bm L}}

\newcommand{\Q}{{\bm Q}}

\newcommand{\W}{{\bm W}}
\newcommand{\X}{{\bm X}}

\newcommand{\Z}{{\bm Z}}


\newcommand{\Eb}{{\mathbb E}}

\newcommand{\Rb}{{\mathbb R}}


\newcommand{\balpha}{{\bm\alpha}}

\newcommand{\btheta}{{\bm\theta}}

\newcommand{\bpsi}{{\bm\psi}}


\newcommand{\btau}{{\bm\tau}}

\newcommand{\bvphi}{{\bm\varphi}}

\newcommand{\T}{\top}               
\newcommand{\sto}{\mathrm{s.t.}~}

\newcommand{\sign}{\mathrm{sign}}

\newcommand{\onef}{{\bm 1}}
\newcommand{\dom}{\mathsf{dom}~}

\newcommand{\infconv}{\Box}

\newcommand{\mytitle}{\myname: Joint Optimization Framework for Lightweight Kernel Machines}

\makeatletter
\newcommand*{\RN}[1]{\expandafter\@slowromancap\romannumeral #1@}
\makeatother

\usepackage{lineno}

\title{\mytitle}

\author{%
  XXX\textsuperscript{1}\thanks{Use footnote for providing further information.}, ~Chong You\textsuperscript{2} \\
  \textsuperscript{1}Ohio State University, \textsuperscript{2}Google Research\\
  \texttt{zhu.3440@osu.edu, cyou@google.com}
}

\begin{document}

\ifdefined\isarxiv

\date{}

\title{\mytitle}
\author{
Junhong Zhang\thanks{\texttt{ apple\_zjh@163.com}. Shenzhen University.}
\and
Zhihui Lai\thanks{\texttt{lai\_zhi\_hui@163.com}. Shenzhen University. Corresponding author.}
}

\else



\fi

\ifdefined\isarxiv
\begin{titlepage}
  \maketitle
  \begin{abstract}
  Kernel methods are powerful tools for nonlinear learning with well-established theory.
  The scalability issue has been their long-standing challenge.
  Despite the existing success,
  there are two limitations in large-scale kernel methods:
  (i) The memory overhead is too high for users to afford;
  (ii) existing efforts mainly focus on kernel ridge regression (KRR), while other models lack study.
  In this paper, we propose \myname, a joint optimization framework for diverse kernel models, including KRR, logistic regression, and support vector machines.
  We design a dual block coordinate descent method with trust region (DBCD-TR) and adopt kernel approximation with randomized features,
  leading to low memory costs and high efficiency in large-scale learning.
  Experiments show that \myname saves up to 90\% memory but achieves comparable training time and performance (or even better) than the state-of-the-art methods.
  \end{abstract}
  \thispagestyle{empty}
\end{titlepage}

{
\hypersetup{linkcolor=black}
\tableofcontents
}
\newpage

\else

\maketitle

\begin{abstract}
\input{main/0.abs}
\end{abstract}

\fi

\section{Introduction}\label{sec:intro}
Kernel methods, a formidable paradigm in nonlinear learning, stand alongside deep learning as a prominent approach in the machine learning landscape.
Recently, theoretical progress that connects the modern deep neural network with the kernel machines has been made \citep{JacotNeural2018,GeifmanSimilarity2020,ChenDeep2020},
highlighting the great potential of kernel methods.
In the big data era, kernel methods' scalability has become a concern.
Kernel ridge regression (KRR) is the simplest kernel model.
However, it is associated with a linear system potentially requiring $O(n^3)$ operations, where $n$ is the size of the training set.
Another instance is the kernel support vector machine (SVM) \citep{CORTESSupportVector1995}.
Despite its prevalence, it also faces scalability challenges with large datasets due to the induced large-scale quadratic programming \citep{WenThunderSVM2018}.

Numerous researchers have dedicated their efforts to tackling the scalability issues.
Nystr\"om method \citep{WilliamsUsing2000} and randomized features \citep{RahimiRandom2007} are two representative techniques to reduce the computation of kernel methods.
Nevertheless, researchers preferred the former one \citep{YangNystrom2012}.
Based on Nystr\"om method, \citet{RudiFALKON2017} proposed a KRR model, \falkon, for large-scale data.
Its variants \citep{RudiFast2018,MeantiKernel2020} still hold the state-of-the-art performance and efficiency for large-scale kernel methods.
However, the outstanding performance is priced at the expense of high memory usage.
\citet{MeantiKernel2020} suggests that a good performance of \falkon-based methods necessitates a large number of Nystr\"om centers.
Yet, they require $O(M^2)$ memory to store the preconditioner for acceleration, where $M$ is the number of Nystr\"om centers.
This leads to a dilemma between memory consumption and performance.
To perform \falkon with $M=1.2\times10^5$ on the HIGGS dataset as in \citep{MeantiKernel2020} using single precision, one needs at least 55GB of memory,
which is unaffordable for most users.
In other words, memory resources become a bottleneck for large kernel machines \citep{AbedsoltanLarge2023}.

We also note that the existing large-scale kernel methods mainly focus on KRR while other models lack study.
For classification models,
to our knowledge,
the recent references about large-scale kernel logistic regression (KLR) and SVM
\footnote{SVM includes SVC and SVR, where ``C '' and ``R'' stand for classification and regression, respectively. Here we refer to SVC.},
which are two classifiers commonly used in practice,
are relatively sparse.
\citep{Marteau-FereyGlobally2019} and \citep{WenThunderSVM2018} are the two latest influential works about KLR and SVM, respectively.
However, they still meet the bottlenecks of either high memory or time consumption.

\begin{table}[t]
    \centering
    \caption{Comparison of large-scale kernel methods on the HIGGS dataset. The ``hybrid'' means \myname supports both exact and inexact kernel models.}
    \vspace{2pt}
    \scalebox{0.84}
    {\begin{tabular}{c|c|c|c|c}
        \toprule
            \textbf{Method} & \textbf{Type} & \textbf{Memory} & \textbf{Time} & \textbf{Models}\\ \midrule
            \textsf{Falkon} & \emm{inexact} & \bad{$>$ 50GB} & \good{$<$ 1 hour} & KRR\\
            \textsf{LogFalkon} & \emm{inexact} & \bad{$>$ 50GB} & \good{$<$ 1 hour} & KLR\\
            \eprd & \emm{inexact} & \good{$\approx$ 7GB} & \bad{$>$ 15 hours} & KRR \\
            \textsf{LIBSVM} & \bad{exact} & \good{$<$ 2GB} & \bad{$>$ 1 week} & SVC, SVR \\
            \thsvm & \bad{exact} & \good{$\approx$ 8GB} & \bad{$>$ 1 week} & SVC, SVR \\
            \midrule
            \myname (ours) & \good{hybrid} & \good{$\approx$ 2GB} & \good{$\approx$ 1 hour} & see Table \ref{tab:loss-list}\\ \bottomrule
    \end{tabular}}
    \label{tab:compare}
\end{table}

In short, this paper aims to address two issues of the large-scale kernel methods:
\emph{expensive memory requirement} and \emph{limited model diversity}.
We propose a \underline{j}oint \underline{o}ptimization framework for \underline{ker}nel machines, \myname,
which significantly reduces memory footprint while keeping comparable performance and efficiency to the state of the art,
and covers a broad range of models with a unified optimization scheme.
The contributions, some backgrounds, and a detailed review of related work are presented in the rest of this section.

\newcommand{\bent}{\mathsf{bEnt}}
\newcommand{\trust}{\mathsf{TR}}
\renewcommand{\arraystretch}{1.2}
\begin{table*}[]
    \centering
    \caption{Fenchel conjugate associated with the losses. $\bent(x):= x\log x+(1-x)\log(1-x)$ denotes binary entropy, and $0\log 0:=0$. In the $L_p$-regression, we have $p,q>1$ and $p^{-1}+q^{-1}=1$.}
    \scalebox{0.9}{
    \begin{tabular}{@{}lll|llll@{}}
    \toprule
    Task          &Model      & Loss functions            & $\Yc$      & $\xi_{y}(u)$                                                   & $\xi_y^*(v)$                & $\dom \xi^*_y$             \\ \midrule
    Regression    &KRR        & Square loss               & $\Rb$      & $(y-u)^2/2$                                                    & $(v)^2/2+vy$                & $\Rb$                      \\
    Regression    &$L_p$-Reg. & $L_p$ loss                & $\Rb$      & $|y-u|^p/p$                                                    & $|v|^q/q+vy$                & $\Rb$                      \\
    Regression    &$L_1$-Reg. & Absolute loss             & $\Rb$      & $|y-u|$                                                        & $vy$                        & $-1\leq v\leq 1$           \\
    Regression    &Huber Reg. & Huber loss                & $\Rb$      & $\left((\cdot)^2/2\infconv\delta|y-\cdot|\right)(u)$           & $(v)^2/2+vy$                & $-\delta\leq v\leq\delta$  \\
    Regression    &SVR        & $\varepsilon$-insensitive & $\Rb$      & $\left(|\cdot|\infconv\pi_{|y-\cdot|\leq\varepsilon}\right)(u)$& $\varepsilon|v|+vy$         & $-1\leq v\leq 1$           \\
    Classification&$L_1$-SVC  & Hinge                     & $\{-1,1\}$ & $\max\{0,1-yu\}$                                                 & $vy$                        & $-1\leq vy\leq 0$          \\
    Classification&$L_2$-SVC  & Squared hinge             & $\{-1,1\}$ & $\max\{0,1-yu\}^2/2$                                           & $(v)^2/2+vy$                & $vy\leq 0$                 \\
    Classification&KLR        & Logistic                  & $\{-1,1\}$ & $\log(1+\exp(-yu))$                                            & $\bent(-vy)$                & $-1\leq vy\leq 0$          \\ \bottomrule
    \end{tabular}
    }
    \label{tab:loss-list}
\end{table*}

\subsection{Contributions}
Our contributions are summarized as three breakthroughs:
\begin{itemize}
    \item \textbf{Unified scheme}: We develop \myname, a joint optimization framework for diverse kernel models beyond KRR,
    presenting a general scheme to support a wide range of large-scale kernel machines.
    \item \textbf{Low consumption}: \myname is lightweight.
    We propose a novel solver, dual block coordinate descent with trust region (DBCD-TR).
    Owing to its low space and time complexity,
    the hardware requirement of large-scale learning is significantly reduced.
    \item \textbf{Superior performance}: We implemented KRR, KLR, SVM, etc. based on \myname,
    and conducted experiments with a \emph{single RTX 3080 (10GB)}.
    It shows that \myname achieves state-of-the-art performance with a low memory budget and moderate training time.
\end{itemize}

\subsection{Preliminaries}\label{sec:prel}
This paper focuses on supervised learning problems.
Let $\Xc$ be a compact sample space, and $\Yc$ be a label set.
The training dataset $\{(\x_i,y_i)\}_{i=1}^n$ includes $n$ samples,
where $\x_i\in\Xc$ is the feature vector of the $i$-th sample and $y_i\in\Yc$ is its label.
For a function $f$, its domain is $\dom f=\{\x\in\Xc:f(\x)<\infty\}$,
and its gradient and Hessian at $\x$ are $\nabla f(\x)$ and $\nabla^2 f(\x)$, respectively.
The Fenchel conjugate of $f$ is 
    $f^*(\y):=\sup_{\x\in\dom f}\langle\y,\x\rangle-f(\x)$.
The infimal convolution of $f$ and $g$ is $(f\infconv g)(\ubf):=\inf_{\ubf}f(\ubf)+g(\x-\ubf)$.
Let $[n]:=\{1,2,\cdots,n\}$.
For two vectors $\abf,\bbf\in\Rb^n$, $\abf\leq\bbf$ means $a_i\leq b_i$ for all $i\in[n]$.
Notation $\langle\abf,\bbf\rangle$ represents the inner product between $\abf$ and $\bbf$.

A Mercer's kernel $\Kc(\cdot,\cdot):\Xc\times\Xc\mapsto\Rb$ uniquely induces a reproducing kernel Hilbert space (RKHS) $\Hc$ with the endowed inner product $\langle\cdot,\cdot\rangle_{\Hc}$,
such that $\Kc(\x,\cdot)\in\Hc$ for all $\x\in\Xc$, and $\langle f,\Kc(\cdot,\x)\rangle_{\Hc}=f(\x)$ for all $f\in\Hc$ and $\x\in\Xc$ \citep{ScholkopfLearning1998}.
We denote $\K\in\Rb^{n\times n}$ the kernel matrix with $K_{ij}=\Kc(\x_i,\x_j)$, where $i,j\in[n]$.
Let $\Ic,\Jc\subseteq[n]$ be index sets, and $\K_{\Ic,\Jc}$ be the submatrix of $\K$ with rows indexed by $\Ic$ and columns indexed by $\Jc$.
We use $\K_{\Ic,:}:=\K_{\Ic,[n]}$ and $\K_{:,\Jc}:=\K_{[n],\Jc}$ for short.
Let $\ell(\cdot,\cdot)$ be a proper and convex loss function. 
A generic kernel machine can be written as:
\begin{equation}
    \min_{\btheta\in\Hc}\frac{\lambda}2\|\btheta\|_{\Hc}^2+\sum_{i=1}^n\ell\left(y_i,\langle\btheta,\bvphi(\x_i)_{\Hc}\rangle\right),\label{eq:kernel-prel}
\end{equation}
where $\bvphi(\x)$ is a nonlinear map satisfying $\Kc(\x,\x')=\langle\bvphi(\x),\bvphi(\x')\rangle_{\Hc}$.
The Representer Theorem \citep{ScholkopfLearning1998} states that the optimal solution $\btheta^\star$ satisfies $\btheta^\star=\sum_{i=1}^n\alpha_i^\star\bvphi(\x_i)$,
for some $\balpha^\star\in\Rb^n$.
\emph{\bad{Exact models}} seeks such $\balpha^\star$ without kernel approximation.
For example, KRR uses $\ell(y,\hat{y})=(y-\hat{y})^2/2$ and corresponds to a closed-form solution $\balpha^\star=(\K+\lambda\Ibf)^{-1}\y$,
however, which is computationally expensive for large-scale data.
\emph{\emm{Inexact models}} approximate the kernel functions with less computational cost.
For example, the Nystr\"om-based method utilizes approximation
$\K\approx\K_{:,\Ic}\K_{\Ic,\Ic}^{-1}\K_{\Ic,:}$ with a specific index set $\Ic\subset[n]$ satisfying $|\Ic|=M$ and $M\ll n$.
It equivalent to restricting $\btheta$ in a subspace: $\btheta=\sum_{i\in\Ic}\alpha_i\bvphi(\x_i)$,
making dimension of parameters largely reduced.

\subsection{Related work}\label{sec:related}
\subsubsection{Large-scale kernel machines}
We start from KRR.
\citet{maDivingShallowsComputational2017} proposed \textsf{EigenPro} to solve the exact KRR using preconditioned gradient descent.
Despite its success, its computational cost is still prohibitive.
The Nystr\"om method has become a dominant technique for inexact kernel models.
Among them, \falkon \citep{RudiFALKON2017} leverages the conjugate gradient descent with a Cholesky-based preconditioner, which made significant progress on large-scale KRR with remarkable performance.
The \falkon-based method \citep{RudiFast2018,Marteau-FereyGlobally2019} usually requires $O(M^2)$ memory to store the Cholesky factor,
where $M$ is the number of Nystr\"om centers, 
resulting in memory limitation in practice.
To this end, \eprd avoids high space complexity by projected gradient descent \citep{AbedsoltanLarge2023},
however, compromising with high time complexity per iteration.
Hence, \citet{AbedsoltanFast2024} proposed a delayed projection technique to improve its efficiency.
Finally,
despite the relevance of KRR, we do not elaborate on works about Gaussian Process \citep{2017GPflow,Gardner2018GPyTorch} in this paper as the techniques are quite different.

Another focus of this paper lies in kernel models for classification.
\thsvm implements the kernel SVM accelerated with GPU for large-scale data \citep{WenThunderSVM2018},
emerging as a competent alternative to \textsf{LIBSVM} \citep{CC01a}.
However,
their solver, Sequential Minimization Optimization (SMO) \citep{Platt1998SMO},
becomes out of date for coping with big data in modern applications,
causing high time consumption.
Based on the Newton method, \citep{Marteau-FereyGlobally2019,MeantiKernel2020} implements \logfal, a fast large-scale KLR, yet has the same memory issue with \falkon.
In this paper, we delve into a joint optimization scheme for lightweight kernel models,
reaching a remarkable balance between performance, memory usage, and training time.
We present an intuitive comparison between the prevalent kernel methods and \myname in Table \ref{tab:compare}.

\subsubsection{Dual Coordinate descent algorithms}
Coordinate descent methods (CD) iteratively select one variable for optimization while keeping all other variables fixed, aiming to decrease the objective function incrementally.
In machine learning, CD has succeeded in the fast training of linear SVM \citep{HsiehDual2008,DaiReHLine2023}.
Moreover, \citet{Shalev-ShwartzStochastic2013} proposed a dual optimization framework with coordinate ascent methods for linear models.
A Newton-based dual CD method is investigated for unconstrained optimization \citep{QuSDNA2016}.
In \myname, we design a unified optimization scheme also using duality.
Nevertheless, our work has crucial differences from theirs in two aspects.
(i) We focus on kernel models, which are usually ill-conditioned and more challenging in optimization than the linear models.
(ii) We employ block coordinate descent (BCD) for more efficient optimization.
Note that \citet{tu2016large} and \citet{RathoreHave2024} also proposed to address KRR with the BCD algorithm.
Notably, \citep{RathoreHave2024} introduces the Nesterov acceleration technique for BCD and proves its convergence under proper conditions.
Beyond their scope, we investigate a general class of kernel models,
including but not limited to KRR, KLR, and SVM, and tackle the intricate constrained optimization.


\section{\myname}\label{sec:method}
\myname focus on convex problem \eqref{eq:kernel-prel} reformulated as:
\begin{equation}
    \min_{\btheta}\frac{1}2\|\btheta\|^2+\frac{1}{\lambda}\sum_{i=1}^n\ell(y_i,\langle\btheta,\bvphi(\x_i)\rangle).\label{eq:base-model}
\end{equation}
We begin with its dual problem (Section \ref{sec:jo-prob}) and present an optimization roadmap (Section \ref{sec:dbcd-tr}).
Inexact \myname is then proposed in Section \ref{sec:rff}.


\subsection{Joint Optimization Problem by Duality}\label{sec:jo-prob}
We first present a direct result for the dual problem of \eqref{eq:base-model}.
\begin{theorem}\label{thm:primal-to-dual}
    Let $\xi_\y(\cdot):\Rb\mapsto\Rb_+$ defined as $\xi_y(u):=\ell(y,u)$.
    Then the optimal solution of \eqref{eq:base-model} is given by
    \begin{gather}
        \btheta^\star=\sum_{i=1}^n\alpha^\star_i\bvphi(\x_i),\label{eq:optimal-w}\\
        \balpha^\star=\arg\min_{\balpha\in\Omega}\frac12\balpha^\T\K\balpha+\frac{1}{\lambda}\sum_{i=1}^n\xi_{y_i}^*\left(-\lambda\alpha_i\right),\label{eq:dual}
    \end{gather}
    where $\Omega=\{\balpha:-\lambda\alpha_i\in\dom{\xi_{y_i}^*},i\in[n]\}$ is the feasible region, $\xi^*_{y}(\cdot)$ is the Fenchel conjugate of $\xi_{y}(\cdot)$, and $\K$ is kernel matrix with $K_{ij}=\langle\bvphi(\x_i),\bvphi(\x_j)\rangle$.
\end{theorem}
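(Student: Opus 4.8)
The plan is to derive the stated dual via Lagrangian (Fenchel--Rockafellar) duality. First I would decouple the loss from the linear model by introducing auxiliary variables $u_i=\langle\btheta,\bvphi(\x_i)\rangle$ and rewriting \eqref{eq:base-model} as the equality-constrained problem $\min_{\btheta,\ubf}\tfrac12\|\btheta\|^2+\tfrac1\lambda\sum_i\xi_{y_i}(u_i)$ subject to $u_i=\langle\btheta,\bvphi(\x_i)\rangle$ for all $i\in[n]$. This is a convex program whose constraints are affine in $(\btheta,\ubf)$, which is exactly the structure that makes strong duality available. Attaching multipliers $\nu_i$ to the constraints, the Lagrangian separates additively into a $\btheta$-part and $n$ independent $u_i$-parts.

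Next I would carry out the inner minimization. Minimizing over $\btheta$ the quadratic $\tfrac12\|\btheta\|^2+\sum_i\nu_i\langle\btheta,\bvphi(\x_i)\rangle$ gives the stationarity condition $\btheta=-\sum_i\nu_i\bvphi(\x_i)$, which is precisely the representer form \eqref{eq:optimal-w} once we rename $\alpha_i:=-\nu_i$. Minimizing over each $u_i$ produces $\inf_{u_i}\big[\tfrac1\lambda\xi_{y_i}(u_i)-\nu_i u_i\big]=-(\tfrac1\lambda\xi_{y_i})^*(\nu_i)=-\tfrac1\lambda\xi_{y_i}^*(\lambda\nu_i)$, where the last equality is the conjugate-scaling identity $(\tfrac1\lambda f)^*(v)=\tfrac1\lambda f^*(\lambda v)$; this is the step that manufactures the $\lambda\nu_i=-\lambda\alpha_i$ argument appearing in the statement. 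Substituting $\btheta=-\sum_i\nu_i\bvphi(\x_i)$ back, the two quadratic contributions collapse via $\langle\bvphi(\x_i),\bvphi(\x_j)\rangle=K_{ij}$ into $-\tfrac12\bnu^\T\K\bnu$, so the dual function is $-\tfrac12\bnu^\T\K\bnu-\tfrac1\lambda\sum_i\xi_{y_i}^*(\lambda\nu_i)$. Maximizing this and substituting $\alpha_i=-\nu_i$ yields exactly the minimization \eqref{eq:dual}; the effective domain of the objective is $\Omega$, since $\xi_{y_i}^*(-\lambda\alpha_i)=+\infty$ whenever $-\lambda\alpha_i\notin\dom\xi_{y_i}^*$.

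Finally I would close the argument by invoking strong duality to guarantee a zero gap and to certify that the primal optimum is recovered from the dual optimum through the stationarity relation $\btheta^\star=\sum_i\alpha_i^\star\bvphi(\x_i)$. The main obstacle is this justification rather than the algebra: because the constraints are affine and each $\xi_{y_i}$ is proper closed convex, the refined Slater condition for affine constraints (equivalently Fenchel--Rockafellar duality) gives zero duality gap as soon as the primal value is finite, which holds here since $\ell\ge 0$ and the regularizer is coercive. One should also check that working in the possibly infinite-dimensional RKHS causes no trouble: the representer form confines the relevant minimization to $\mathrm{span}\{\bvphi(\x_i)\}_{i=1}^n$, so the $\btheta$-minimization is a genuine finite-dimensional quadratic and attainment is not in question. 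A minor point to verify in passing is that $\xi_{y_i}^*$ is indeed the conjugate tabulated in Table \ref{tab:loss-list}, but that is a per-loss computation independent of the duality itself.
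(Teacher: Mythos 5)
Your proposal follows essentially the same route as the paper's proof: introduce auxiliary variables $u_i=\langle\btheta,\bvphi(\x_i)\rangle$, form the Lagrangian, minimize over $\btheta$ and $\ubf$ to expose the representer form and the Fenchel conjugates, and negate the resulting concave maximization to obtain \eqref{eq:dual}. Your additional care with the conjugate-scaling identity, the strong-duality justification via affine constraints, and the finite-dimensional reduction in the RKHS only makes explicit steps the paper leaves implicit (it invokes Slater's condition in the main text rather than in the proof), so there is no substantive difference in approach.
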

The proof is shown in Appendix \ref{app:proof:thm-dual}.
We first present some important properties.
(i) Problem \eqref{eq:dual} is convex due to the convexity of $\xi_{y_i}^*(\cdot)$.
(ii) The strong duality holds according to Slater's condition \citep{boyd2004CVXOPT},
meaning that if $\balpha^\star$ is optimal for \eqref{eq:dual}, then $\btheta^\star$ is optimal for \eqref{eq:base-model}.
(iii) Based on the closeness and the convexity of the conjugate function \citep[Theorem 4.3]{beck2017firstorder},
the domain of $\xi_{y_i}^*(\cdot)$ should be a closed interval.
Therefore, the constraints in \eqref{eq:dual} are simply box constraints, i.e., $\alpha_i\in[\tau^L_i,\tau^U_i]$
with $-\infty\leq\tau^L_i<\tau^U_i\leq\infty$.
(iv) Problem \eqref{eq:dual} can be better conditioned than the primal form \eqref{eq:base-model},
as the dual Hessian is linearly dependent on $\K$ and the primal Hessian is a quadratic form of $\K$.
Since $\K$ is usually ill-conditioned,
dual optimization should converge faster than the primal one.
This benefit is also noted by \citet{tu2016large} and \citet{RathoreHave2024}.

Theorem \ref{thm:primal-to-dual} covers a wide range of kernel models using different loss functions.
We list representative ones in Table \ref{tab:loss-list}.
Notably, problem \eqref{eq:dual} can easily adapt to some sophisticated loss functions,
whereas its primal problem \eqref{eq:base-model} may be tricky.
To state this, we introduce the following proposition:
\begin{proposition}\label{prop:infconv}
    If $\xi_{y}(\cdot)$ can be written as $\xi_{y}(u):z=(\xi_{y,1}\infconv\xi_{y,2}\infconv\cdots\infconv\xi_{y,s})(u)$, then the dual problem of \eqref{eq:base-model} is
    \begin{equation}
        \min_{\balpha\in\Omega}\frac12\balpha^\T\K\balpha+\frac{1}{\lambda}\sum_{i=1}^n\sum_{r=1}^s\xi_{y_i,r}^*\left(-\lambda\alpha_i\right).\label{eq:dual-infconv}
    \end{equation}
    with $\Omega=\{\balpha:-\lambda\alpha_i\in\bigcap_{r}\dom\xi_{y_i,r}^*,(i,r)\in[n]\times[s]\}$.
\end{proposition}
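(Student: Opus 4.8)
The plan is to reduce the claim to Theorem~\ref{thm:primal-to-dual} and then simplify the resulting conjugate using the classical Fenchel duality between infimal convolution and addition. First I would observe that each component $\xi_{y,r}$ is a proper convex function (it is one of the loss pieces, as in the Huber or SVR rows of Table~\ref{tab:loss-list}), so their infimal convolution $\xi_y=\xi_{y,1}\infconv\cdots\infconv\xi_{y,s}$ is again proper and convex. Hence Theorem~\ref{thm:primal-to-dual} applies verbatim and yields the dual
\begin{equation*}
    \min_{\balpha\in\Omega}\frac12\balpha^\T\K\balpha+\frac1\lambda\sum_{i=1}^n\xi_{y_i}^*(-\lambda\alpha_i),\quad \Omega=\{\balpha:-\lambda\alpha_i\in\dom\xi_{y_i}^*\}.
\end{equation*}
It then remains only to rewrite the single conjugate $\xi_{y_i}^*$ as the sum $\sum_{r=1}^s\xi_{y_i,r}^*$ and to decompose its domain accordingly.

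The key step is the identity $(f_1\infconv\cdots\infconv f_s)^*=f_1^*+\cdots+f_s^*$ for proper functions, which I would prove for $s=2$ by direct computation and then extend by induction. Unrolling the definitions,
\begin{align*}
    (f\infconv g)^*(v)
    &=\sup_{u}\Big(\langle v,u\rangle-\inf_{w}\big[f(w)+g(u-w)\big]\Big)\\
    &=\sup_{u,w}\big(\langle v,u\rangle-f(w)-g(u-w)\big),
\end{align*}
and after the change of variable $z=u-w$ the supremum separates as $\sup_w(\langle v,w\rangle-f(w))+\sup_z(\langle v,z\rangle-g(z))=f^*(v)+g^*(v)$. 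This is the ``easy'' direction of Fenchel duality and holds unconditionally, so no constraint qualification is needed; it is only the reverse identity $(f+g)^*=f^*\infconv g^*$ that would require such a hypothesis. Induction on $s$ then gives $\xi_{y_i}^*=\sum_{r=1}^s\xi_{y_i,r}^*$.

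Substituting this expansion into the objective above immediately produces the claimed form \eqref{eq:dual-infconv}. For the feasible region, since a finite sum of extended-real-valued convex functions is finite exactly when every summand is, we have $\dom\xi_{y_i}^*=\dom\big(\sum_{r}\xi_{y_i,r}^*\big)=\bigcap_{r}\dom\xi_{y_i,r}^*$, which is precisely the description of $\Omega$ in the statement. I expect the main obstacle to be making the first reduction fully rigorous: one must verify that the infimal convolution $\xi_y$ is proper (neither identically $+\infty$ nor attaining $-\infty$) so that Theorem~\ref{thm:primal-to-dual} legitimately applies, and this is exactly where the assumption that each $\xi_{y,r}$ is a genuine proper convex loss piece does the real work. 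Once properness is secured, both the conjugate identity and the domain decomposition are purely mechanical.
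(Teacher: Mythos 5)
Your proposal is correct and follows essentially the same route as the paper: apply Theorem~\ref{thm:primal-to-dual} to the composite loss and then use the identity $(\xi_1\infconv\xi_2)^*=\xi_1^*+\xi_2^*$ together with induction on $s$ (the paper simply cites \citep[Theorem 4.16]{beck2017firstorder} for this identity, whereas you prove it by direct computation). Your additional remarks on properness of the infimal convolution and the decomposition of the domain as $\bigcap_r\dom\xi_{y_i,r}^*$ are sound and merely make explicit what the paper leaves implicit.
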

\begin{proof}
    The result is simply by induction and the property of infimal convolution: $(\xi_1\infconv\xi_2)^*=\xi_1^*+\xi_2^*$ \citep[Theorem 4.16]{beck2017firstorder}.
\end{proof}
For example, Huber loss is used for robust regression, which is defined as
\begin{equation*}
    \ell(y,u)=\begin{cases}
        \frac12(u-y)^2,&\text{if }|u-y|\leq\delta,\\
        \delta|u-y|-\frac12\delta^2,&\text{Otherwise}.
    \end{cases}
\end{equation*}
It may be uneasy to optimize its primal problem directly.
However, as it can be written as the infimal convolution form: $\xi_y(u)=\left(\frac12(\cdot)^2\infconv\delta|y-\cdot|\right)(u)$,
the Fenchel conjugate $\xi_y^*(u)$ is easily derived via Proposition \ref{prop:infconv}.
Interestingly, Huber loss and square loss (KRR) have almost the same dual problem, with the only difference in the feasible region.
The commonalities of the two models imply that they can be solved similarly.
\begin{algorithm2e}[t]
    \caption{Trust region (twice-differentialable $f$)}\label{alg:trust-region}
    \SetKwInOut{Input}{Input}
    \SetKwInOut{Output}{Output}
    
    \Input{Initial point $\balpha_\Bc$, block kernel matrix $\K_{\Bc,\Bc}$, kernel gradient $\bar\g=\K_{\Bc,:}\balpha$, function $f(\cdot)$.
    Max region size $\Delta_{\max{}}$, threshold $\eta\in[0,1/4]$, tolerance $\epsilon>0$, and max iteration $T_{\mathsf{TR}}$.
    }
    $\balpha_{\Bc,0}\gets\balpha_{\Bc},~\Delta_0\gets\Delta_{\max{}}/4$\;
    $\btau^U_\Bc\gets f(\balpha_\Bc).\texttt{upper},\quad \btau^L_\Bc\gets f(\balpha_\Bc).\texttt{lower}$\;
    \For{$k=1$ \KwTo $T_{\mathsf{TR}}$}{
        $\g_k\gets\bar\g+\nabla f(\balpha_{\Bc,k}),\Q_k\gets\K_{\Bc,\Bc}+\nabla^2f(\balpha_{\Bc,k})$\;%
        $\s_k\gets\texttt{TCG-Steihaug}(\Q_k,\g_k,\balpha_{\Bc,k},\btau^U_\Bc,\btau^L_\Bc)$, i.e. Algorithm \ref{alg:CG-steihaug-box}\;
        $\rho_k\gets(J(\balpha_{\Bc,k})-J(\balpha_{\Bc,k}+\s_k))/(\mu(\bm0)-\mu(\s_k))$\;
        \uIf{$\rho_k<0.5$}{
            $\Delta_{k+1}\gets\Delta_k/4$\;
        } \uElseIf{$\rho_k>0.75$ and $\Delta_k-\|\s\|<\epsilon$}{
            $\Delta_{k+1}\gets\min\{2\Delta_k,\Delta_{\max{}}\}$\;
        } \Else {
            $\Delta_{k+1}\gets\Delta_k$\;
        }
        \If {$\rho_k>\eta$}{
            $\balpha_\Bc\gets\balpha_\Bc+\s$\;
            $\bar\g\gets\bar\g+\K_{\Bc,\Bc}\s$\;
        }
    }
\end{algorithm2e}

\subsection{Dual Block Coordinate Descent with Trust Region}\label{sec:dbcd-tr}
Coordinate descent (CD) is particularly suitable for the optimization of \eqref{eq:dual} for its separable structure \citep{NutiniLets2022}.
The key idea of CD is to optimize one variable while fixing others in each iteration,
exhibiting inexpensive computation and storage.
Despite its simplicity, it suffers from slow convergence for large-scale kernel machines since it only updates one variable at a time.
Thus, we propose updating multiple variables (i.e., a block) simultaneously to leverage the merit of parallel computing,
leading to our core solver, Dual Block Coordinate Descent with Trust Region (\myalg).
We show the main idea and leave some details in Appendix~\ref{app:dbcd-tr}.

In each iteration of block coordinate descent,
we pick an index set $\Bc=\{i_1,\cdots,i_{|\Bc|}\}\subset[n]$
and $\Bc_{\complement}:=[n]\backslash\Bc$ is the index set associated with the fixed variables.
We define $f(\balpha_\Bc):=\sum_{i\in\Bc}\xi^*_{y_i}(-\lambda\alpha_i)/\lambda$ for simplicity, where $\balpha_\Bc$ is the subvector of $\balpha$ indexed by $\Bc$.
Minimizing \eqref{eq:dual} w.r.t $\balpha_\Bc$ while fixing $\balpha_{\Bc_\complement}$, we have:
\begin{align}
    &\min_{\balpha_\Bc}J(\balpha_\Bc):=\frac12\balpha_\Bc^\T\K_{\Bc,\Bc}\balpha_\Bc+\balpha_{\Bc_{\complement}}^\T\K_{\Bc_{\complement},\Bc}\balpha_{\Bc}+f(\balpha_\Bc),\notag\\
    &\sto \bm\tau_\Bc^L\leq\balpha_\Bc\leq\bm\tau_\Bc^U,\label{eq:blk-subprob}
\end{align}
There are two challenges in solving subproblem \eqref{eq:blk-subprob}:
(i) The smoothness of $f(\balpha_\Bc)$ is unknown;
(ii) even with smooth $f(\balpha_\Bc)$, the box constraints are still tricky.
For (ii), the projected Newton method \citep{GafniTwoMetric1984} is a good choice to cope with the box constraints.
However, its difficulty is the step size tuning.
An improper step size could degrade the projected Newton step \citep{SchmidtProjected2011,NutiniLets2022}.

%
To solve \eqref{eq:blk-subprob}, we employ the trust region method \cite{SorensenNewtons1982},
which is an iterative algorithm for efficient optimization.
Denote $\balpha_{\Bc,k}$ the $k$-th iterate of the trust region procedure.
The next iterate is given by $\balpha_{\Bc,k+1}:=\balpha_{\Bc,k}+\s_k$ with a proper step $\s_k$ restricted in the trust region $\{\s:\|\s\|\leq\Delta_k\}$, where $\Delta_k$ is the radius.
Define a quadratic model function:
\begin{equation}
    \mu_k(\s):=J(\balpha_{\Bc,k})+\g_k^\T\s+\frac12\s^\T\Q_k\s.\label{eq:quad_model}
\end{equation}
such that $\mu_k(\s)\approx J(\balpha_{\Bc,k}+\s)$ for $\|\s\|\leq \Delta_k$.
Then we find the step $\s_k$ by minimizing an easier quadratic function $\mu_k(\s)$.
The trust region method can overcome two challenges in our problem \eqref{eq:blk-subprob} naturally.
Firstly, it is compatible with non-smooth optimization \cite{BaraldiEfficient2025},
and secondly, it implicitly tunes the step size by adjusting the radius $\Delta_k$ in each iteration, giving a crucial safeguard for convergence \cite{BaraldiLocal2024}.

We first study the most common scenario where $f$ is twice differentiable,
which is applicable to most losses listed in Table~\ref{tab:loss-list}.
%
The procedure described below is summarized in Algorithm~\ref{alg:trust-region}.
In this case, we can construct the quadratic model $\mu_k(\cdot)$ using Taylor expansion.
That is, let $\Q_k=\K_{\Bc,\Bc}+\nabla^2f(\balpha_{\Bc,k})$ and $\g_k=\K_{\Bc,:}\balpha+\nabla f(\balpha_{\Bc,k})$ in \eqref{eq:quad_model}.
Then the ``next step'' $\s_k$ is given by:
\begin{equation}
\begin{aligned}
    \s_k=\argmin_{\btau_\Bc^L\leq\balpha_{\Bc,k}+\s\leq\btau_\Bc^U} \frac12\s^\T\Q_k\s+\g_k^\T\s,~\sto\|\s\|\leq\Delta,\label{eq:trust-region}
\end{aligned}
\end{equation}%
%
%
However,
$\s_k$ may not be the good enough step when $\mu_k(\s)$ does not approximate $J(\balpha_{\Bc,k}+\s)$ well.
Considering this, we should evaluate the quality of $\s_k$ and only apply $\balpha_{\Bc,k+1}=\balpha_{\Bc,k}+\s_k$ for the qualified step,
and keep unmoved otherwise, i.e., $\balpha_{\Bc,k+1}=\balpha_{\Bc,k}$.
A generic trust region procedure evaluates $\s_k$ by the ratio:
\begin{equation}
    \rho_k:=\frac{J(\balpha_{\Bc,k})-J(\balpha_{\Bc,k}+\s_k)}{\mu_k(\bm0)-\mu_k(\s_k)}.\label{eq:rho_k}
\end{equation}
A large $\rho_k$ suggests that the objective $J(\cdot)$ is decreased sufficiently,
and we tend to accept $\s_k$.
Specifically, $\s_k$ is qualified if $\rho_k>\eta$, where $\eta\in(0,1/4]$ is the acceptance threshold.
On the other hand, we can know that $\mu_k(\s)$ cannot approximate $J(\balpha_{\Bc,k}+\s)$ when $\rho_k$ is small.
In this case, the radius of the trust region should be reduced, e.g., $\Delta_{k+1}:=\Delta_k/4$.
Oppositely, we can enlarge the radius in the next iteration to allow a larger step when $\rho_k$ is large.

\begin{algorithm2e}[t]
    \caption{Truncated CG-Steihaug}\label{alg:CG-steihaug-box}
    \SetKwInOut{Input}{Input}
    \SetKwInOut{Output}{Output}
    \Input{Quadratic model $\Q,\g$, initial guess $\balpha_{\Bc}$, region size $\Delta$, bounds $\btau^U_\Bc,\btau^L_\Bc$, tolerance $\varepsilon$.}
    \Output{Truncated CG step $\s$.}
    $\s=\bm{0},\rbf\gets-\g, \dbf\gets\rbf,\texttt{r2old}\gets\rbf^\T\rbf$\;
    \While{not converged}{
        $\omega \gets (\texttt{r2old})/(\dbf^\T\Q\dbf)$,
        $\s_{\mathrm{next}} \gets \s + \omega \dbf$\;
        \uIf{$\|s_{\mathrm{next}}\|>\Delta$}{
            Determines $\omega'>0$ such that $\|\s+\omega'\dbf\|=\Delta$\;
            $\s\gets\s+\omega'\dbf$ and break\;
        }
        \ElseIf{$\s_{\mathrm{next}}$ violates box constraints}{
            break\;
        }
        $\s \gets \s_{\mathrm{next}},\quad\rbf \gets \rbf - \omega \Q \dbf,\quad \texttt{r2new}\gets\rbf^\T\rbf$\;

        \If{$\texttt{r2new} \leq \varepsilon$}{
            break;
        }
        $\nu \gets \texttt{r2new}/\texttt{r2old}$,
        $\dbf \gets \rbf + \nu \dbf$\;
    }
    \Return $\max\{\min\{\s, \bm\tau^U_\Bc-\balpha_\Bc\}, \bm\tau^L_\Bc-\balpha_\Bc\}$\;
\end{algorithm2e}

The subsequent issue is to find an effective solver for \eqref{eq:trust-region}.
The vanilla trust region problem can be solved efficiently with the conjugate gradient method (CG) proposed by \citet{SteihaugConjugate1983a}.
However, for \eqref{eq:trust-region}, extra consideration should be taken on the box constraints.
To this end, we propose a heuristic truncated CG-Steihaug method, as shown in Algorithm \ref{alg:CG-steihaug-box}.
The key is to terminate the CG procedure if $\s$ violates the box constraints or goes beyond the trust region boundary,
and finally project $\s$ back to the feasible region.
Compared with the projected Newton method suggested in \citep{GafniTwoMetric1984,NutiniLets2022},
Algorithm \ref{alg:CG-steihaug-box} computes a truncated CG step instead of the exact inverse $\Q^{-1}\g$,
and thus is more efficient than their projected Newton step.
Moreover, Algorithm \ref{alg:CG-steihaug-box} elegantly embeds the step size tuning into the projected Newton by the nature of the trust region,
which also eases the implementation.

Now we consider the complexity of DBCD-TR per iteration.
Its space complexity is only $O(|\Bc|^2)$ lying in the storage of $\K_{\Bc,\Bc}$.
In each iteration,
Algorithm \ref{alg:CG-steihaug-box} is repeated $T_{\mathsf{TR}}$ times,
resulting in a time complexity of $O(T_{\mathsf{TR}}T_{\mathsf{CG}}|\Bc|^2)$,
where $T_{\mathsf{CG}}$ is the number of CG iterations.
In our implementation, $T_{\mathsf{TR}}\leq50$
and $T_{\mathsf{CG}}$ is generally tiny ($\leq10$) due to CG's fast convergence and the truncations.
The most expensive computation lies in $\K_{\Bc,:}\balpha$ with a time complexity of $O(nd|\Bc|)$ supposing a single kernel evaluation costs $O(d)$ time.
This brings us to the next major issue to be overcome.







\subsection{Inexact \myname via Randomized Features}\label{sec:rff}
A blueprint to solve the pivotal problem \eqref{eq:dual} is presented in Section \ref{sec:dbcd-tr}.
If $\K_{\Bc,\Bc}$ and $\K_{\Bc,:}\balpha$ are computed exactly in DBCD-TR,
we obtain \bad{exact \myname}.
However,
its bottleneck occurs in computing $\K_{\Bc,:}\balpha$ for the $O(nd|\Bc|)$ time complexity,
which becomes a heavy computational burden when $n\geq10^6$.
To alleviate it,
we propose \emm{inexact \myname}.
The goal is to approximate the exact kernel evaluations with a finite-dimensional mapping $\bpsi(\cdot):\Xc\mapsto\Rb^{M}$ with $M\ll n$ such that $\Kc(\x,\x')\approx\bpsi(\x)^\T\bpsi(\x')$.
The following discussion and our implementation are based on the Random Fourier feature (RFF) \cite{RahimiRandom2007}, a well-studied kernel approximation approach.
Using Bochner's theorem, one can write a shift-invariant kernel (i.e, the value of $\Kc(\x,\x')$ only depends on $\x-\x'$) as
\begin{equation*}
    \begin{aligned}
    \Kc(\x,\x')
    &=\int e^{j\w^\T(\x-\x')}\mathrm{d}p_\Kc(\w)=\Eb_{\w}[\zeta_\w(\x)\bar\zeta_\w(\x')],\\
    \end{aligned}
\end{equation*}
where $j=\sqrt{-1}$ denotes imaginary unit, $p_\Kc$ is a proper probability distribution associated with the kernel $\Kc$,
and $\zeta_\w(\x)=\exp(-j\w^\T\x)$.
Considering $\Kc$ is real-valued, we can further derive
\begin{equation*}
    \Kc(\x,\x')=\mathop{\Eb}\limits_{\w\sim p_\Kc,b\sim U_{[0,2\pi]}}[2\cos(\w^\T\x+b)\cos(\w^\T\x'+b)],
\end{equation*}
where $U_{[0,2\pi]}$ denotes the uniform distribution on $[0,2\pi]$.
Based on the Monte Carlo method, $\bpsi(\x)$ can be defined as:
\begin{equation}
    \bpsi(\x)=\sqrt{\frac{2}{M}}\cos(\bm\W\x+\bbf),b_i\sim U_{[0,2\pi]},\w_{i}\sim p_\Kc,\label{eq:rff}
\end{equation}
where $\cos(\cdot)$ is applied element-wise, $\W\in\Rb^{M\times d},\bbf\in\Rb^{M}$ and each row
$\w_i,b_i$ are sampled from the probability distributions $p_\Kc(\w)$ and $U_{[0,2\pi]}$, respectively.
Gaussian kernel $\Kc(\x,\x')=\exp(-\|\x-\x'\|^2/(2\sigma^2))$ is a widely used kernel in RFF,
which corresponds to $p_\Kc(\w)=\Nc(\bm0,\sigma^{2}\Ibf)$, i.e., the Gaussian with zero mean and covariance $\sigma^2\Ibf$.
Note that although RFF is initially proposed for shift-invariant kernels,
it has been sufficiently developed to diverse kernels, such as dot-product kernels and additive kernels.e can find a comprehensive summary of RFF for various kernels in the latest report of \citep{DaiScalable2014}.

\begin{algorithm2e}[t]
    \caption{DBCD-TR for problem \eqref{eq:dual}}\label{alg:dbcd-tr}
    \SetKwInOut{Input}{Input}
    \SetKwInOut{Output}{Output}
    \Input{Kernel $\Kc$, feasible region $\Omega$, function $\xi_{y}(\cdot)$, parameter $\lambda$, block size $b$, max iteration $T$.}
    \Output{The multiplier $\balpha$ (predictor $\btheta$)}
    Initialize $\balpha\in\Omega$, partition $[n]$ into blocks $\Bc_1,\cdots,\Bc_m$\;
    \If {using inexact model} {
        Sample $\W\in\Rb^{M\times d}\sim p_\Kc,\bbf\in\Rb^{M}\sim U_{[0,2\pi]}$\;
        Define map $\bpsi(\x):=\cos(\W\x+\bbf)$\;
        Initialize $\btheta$ such that $\btheta=\sum_{i=1}^n\alpha_i\bpsi(\x_i)$.
    }
    \For{$t=1,2,\cdots,T$}{
        Randomly pick a block $\Bc\in\{\Bc_1,\cdots,\Bc_m\}$\;
        Let $f(\balpha_\Bc):=\sum_{i\in\Bc}\xi_{y_i}^*(-\lambda\alpha_i)/\lambda$\;
        \If {using inexact model} {
            $\K_{\Bc,\Bc}\gets\bpsi(\X_\Bc)^\T\bpsi(\X_\Bc)$\;
            $\K_{\Bc,:}\balpha\gets\bpsi(\X_\Bc)^\T\btheta$\;
        }
        $\balpha_\Bc^{\texttt{new}}\gets\texttt{TrustRegion}(\balpha_\Bc,\K_{\Bc,\Bc},\K_{\Bc,:}\balpha,f)$ to solve problem \eqref{eq:blk-subprob}, i.e., Algorithm~\ref{alg:trust-region}\;
        \If {using inexact model} {
            $\btheta\gets\btheta + \bpsi(\X_\Bc)(\balpha_\Bc^{\texttt{new}}-\balpha_\Bc)$\;
        }
        $\balpha_\Bc\gets\balpha_\Bc^{\texttt{new}}$\;
    }
    \Return $\balpha$ (and $\btheta$ if using inexact model)\;
\end{algorithm2e}

Now we use a new kernel $\Kc_{\mathsf{rff}}(\x,\x'):=\bpsi(\x)^\T\bpsi(\x')$ to replace the exact one $\Kc$.
Then the kernel matrix becomes $K_{ij}=\bpsi(\x_i)^\T\bpsi(\x_j)$.
Let $\bvphi(\x):=\bpsi(\x)$ in Theorem \ref{thm:primal-to-dual}, we obtain $\btheta=\sum_{i=1}^n\alpha_i\bpsi(\x_i)$, leading to
\begin{equation}
    \K_{\Bc,:}\balpha=\sum_{i=1}^n\bpsi(\X_{\Bc})^\T\bpsi(\x_i)\alpha_i=\bpsi(\X_{\Bc})^\T\btheta.\label{eq:kernel-approx}
\end{equation}
where $\bpsi(\X_{\Bc}):=[\bpsi(\x_i)]_{i\in\Bc}$.
Therefore, time complexity of evaluating $\K_{\Bc,:}\balpha$ is reduced to $O(Md|\Bc|)$.
To implement this,
we must maintain the weight vector $\btheta$ during the optimization.
Once $\balpha_\Bc$ is updated, $\btheta$ is updated with only $O(M|\Bc|)$ time complexity:
\begin{equation}
    \btheta^\texttt{new}:=\btheta^\texttt{old}+\sum_{i\in\Bc}(\alpha_i^\texttt{new}-\alpha_i^\texttt{old})\bpsi(\x_i),\label{eq:beta-update}
\end{equation}
Inexact \myname needs extra memory of $O(Md)$ to store $\W$.
Notably, increasing $M$ generally produces better approximation and performance,
but also costs more time and storage.
One can find a theoretical guide for setting $M$ in \citep{LanthalerError2023}.
Up to this point, we can present the complete DBCD-TR procedure in Algorithm \ref{alg:dbcd-tr}.


Finally, we justify why we do not consider the Nystr\"om method in the proposed \myname,
although it is the preferred approximation method in many works such as \citep{YangNystrom2012,RudiFALKON2017}.
In short, Nystr\"om method is unsuitable in our scenario due to its heavy computation and storage in each iteration.
Assume $\Z\in\Xc^M$ are the Nystr\"om centers and $M$ is the number of centers.
Then the nonlinear map becomes $\bpsi(\x):=\Lbf^{-1}\Kc(\Z,\x)$, where $\Lbf$ is the Cholesky factor of $\Kc(\Z,\Z)$.
Therefore, to utilize the Nystr\"om method, one should first compute $\Lbf$ with $O(M^3)$ time complexity and store it with $O(M^2)$ space complexity.
When updating a block $\Bc$ during training,
the computation of $\bpsi(\x)$ costs $O(M^2+Md)$ time,
where $O(M^2)$ is from the inverse of the triangular matrix $\Lbf$.
In other words, the time complexity per block update is at least $O(M^2+Md)$,
so it is too expensive when $M$ is large.
Unfortunately, the existing works \cite{RudiFALKON2017,AbedsoltanLarge2023} have shown that a large $M$ is necessary to yield a satisfying performance.
Therefore, the Nystr\"om method does not fit \myname.
In contrast, RFF is more efficient and scalable.
We compare the complexity of different methods in Table \ref{tab:compl}.

\begin{table}[t]
    \centering
    \caption{Complexity comparison, $|\Bc|\ll M\ll n$. (\textsf{Log})\falkon has extra setup time of $O(M^3)$ and post-process time of $O(M^2)$. $\Mc_{\mathsf{ep2}}$ is the memory cost of EigenPro2 \citep{MLSYS2019Eigenpro2}.}
    \vspace{2pt}
    \scalebox{0.85}{\begin{tabular}{l|ll}
    \toprule
         Methods                 & Space        & Operations per epoch \\\midrule
         (\textsf{Log})\falkon   & $M^2+Md$     & $nMd$ \\
         \eprd                   & $\Mc_{\mathsf{ep2}}+Md$         & $nMd+\frac{n}{|\Bc|}O(M^2)$ \\
         Exact \myname           & $|\Bc|^2$    & $n^2d+\frac{n}{|\Bc|}(d|\Bc|^2)$\\
         Inexact \myname         & $|\Bc|^2+Md$ & $nMd+\frac{n}{|\Bc|}(M|\Bc|^2+Md|\Bc|)$ \\
         \bottomrule
    \end{tabular}}
    \label{tab:compl}
\end{table}

\section{Practical Instances}
This section presents some example models of \myname and the associated implementation issues.
Additionally, SVR is also discussed in Appendix \ref{app:sec:svr}.

\subsection{Simple cases: KRR, Huber regression and $L_2$-SVC}
Based on Table \ref{tab:loss-list} and Theorem \ref{thm:primal-to-dual},
the dual problems of KRR, Huber, and $L_2$-SVC share the same form:
\begin{equation}
    \min_{\balpha\in\Omega}\frac12\balpha^\T(\K+\lambda\Ibf)\balpha-\y^\T\balpha,\label{eq:krr-huber}
\end{equation}
where $\Omega=\Rb^n$ for KRR, $\Omega=\{\balpha:\|\balpha\|_\infty\leq\delta/\lambda\}$ for Huber,
and $\Omega=\{\balpha:\alpha_iy_i\geq0\}$ for $L_2$-SVC.
That is, the three models only have differences in the feasible region.
Their similarity eases the practical implementation.
Due to the simplicity of the quadratic functions,
their convergence is usually fast.
The empirical results (Table \ref{tab:exp}) show that the elapsed time of these three models is close,
suggesting that they can achieve comparable speed.
That is, \myname fills the potential efficiency gap between different kernel models.

\subsection{A Complicated case: KLR}
The case of KLR is more complicated and has some practical problems.
From now on, we define $\widehat\balpha:=\y\odot\balpha$, where $\odot$ denotes the element-wise product.
We obtain the dual problem for KLR:
\begin{equation*}
    \min_{\balpha\in\Omega}\frac12\balpha^\T\K\balpha+\sum_{i=1}^n\widehat\alpha_i\log\widehat\alpha_i+(\lambda^{-1}-\widehat\alpha_i)\log(\lambda^{-1}-\widehat\alpha_i).
\end{equation*}
The feasible region is $\Omega=\{\balpha:0<\widehat\alpha_i<1/\lambda\}$.
Due to $\lim_{x\to0}x\log x=0$, it can be extended to $0\leq\widehat\alpha_i\leq1/\lambda$ by defining $0\log 0=0$.
However, a problem arises when $\widehat\alpha_i$ is near the boundary.
Consider the gradient and Hessian:
\begin{gather*}
    \nabla f(\widehat\balpha_\Bc)_{i}=\log(\widehat\alpha_i)-\log(\lambda^{-1}-\widehat\alpha_i),\\
    \nabla^2 f(\widehat\balpha_\Bc)_{ii}=\left[\widehat\alpha_i(1-\lambda\widehat\alpha_i)\right]^{-1}.
\end{gather*}
We can see that they are unbounded at $0$ and $1/\lambda$.
This poses two issues.
First, the quadratic model in \eqref{eq:trust-region} becomes ill-conditioned near the boundary, causing a rapid shrinkage of the trust region (i.e., $\Delta\to 0$) and slow convergence \citep{BaraldiEfficient2025}.
The second issue is the potential catastrophic cancellation \citep{YuDual2011}.
When $\widehat\alpha_i\approx 0$, the result of $1/\lambda-\widehat\alpha_i$ may be inaccurate because of the limited precision of the computer,
further leading to incorrect logarithms in the gradient computation.

To mitigate these issues,
we first redefine the feasible region as $\varepsilon\leq\widehat\alpha_i\leq1/\lambda-\varepsilon$,
where $\varepsilon$ is the distance from $1/\lambda$ to its next smaller floating-point number.
In this way, $1/\lambda-\widehat\alpha_i$ will always be precise for all feasible $\widehat\alpha_i$.
Moreover, we utilize a modified Hessian $\widetilde H_{ii}=\min(\nabla^2 f(\widehat\balpha_\Bc)_{ii}, \varepsilon^{-1/2})$
to avoid the ill-conditioned model.
This also alleviates possible catastrophic cancellation when computing the frequent operation $\Q\s$.
Finally, we increase the block size as suggested in \citep{NutiniLets2022} to overcome the slow convergence issue.
We found that these strategies significantly improve the numerical stability and convergence speed.

\newcommand\best[1]{\colorbox[RGB]{215, 215, 215}{#1}}
\newcommand\scnd[1]{\colorbox[RGB]{242, 242, 242}{#1}}
\begin{table*}[]
    \centering
    \renewcommand{\arraystretch}{1.00}
    \caption{The performance comparison on regression (MSD, HEPC) and classification (SUSY, HIGGS, CIFAR-5M) datasets.
    ``$\downarrow$'': lower is better, and vice versa for ``$\uparrow$''.
    ``NA'': meaning not applicable.
    ``Timeout'': the running time exceeds the limit of 1 week.
    ``$\dagger$'': using the double floating-point precision.
    Data below each term indicates the time and peak GPU memory consumption.
    \thsvm and \myname-{SVM} apply the SVR model on MSD and HEPC datasets and the SVC model on others.
    The top-2 results are highlighted in bold.}
    \vspace{1pt}
    \scalebox{0.75}{
        \begin{tabular}{@{}cllllcll@{}}
            \toprule
            \multirow{2}{*}{Methods}         & \multicolumn{1}{c}{MSD ($n\approx0.5$M)} & \multicolumn{1}{c}{HEPC ($n\approx2$M)} & \multicolumn{2}{c}{SUSY   ($n\approx5$M)} & \multicolumn{2}{c}{HIGGS   ($n\approx11$M)}                                    & \multicolumn{1}{c}{CIFAR-5M}                 \\ \cmidrule(l){2-8} 
                                             & rel. error ($\times 10^{-3},\downarrow$) & RMSE ($\times 10^{-2},\downarrow$)      & AUC (\%$,\uparrow$) & ACC (\%$,\uparrow$) & \multicolumn{1}{l}{AUC (\%$,\uparrow$)} & ACC (\%$,\uparrow$)                  & \multicolumn{1}{c}{ACC (\%$,\uparrow$)}      \\ \midrule
            \multirow{2}{*}{\falkon}         & \scnd{\textbf{4.4984}$\pm$0.0013}        & 5.4642$\pm$0.0178$^\dagger$             & 87.61$\pm$0.00      & 80.38$\pm$0.00      & \multicolumn{1}{l}{80.90$\pm$0.09}      & 73.42$\pm$0.07                       & 68.24$\pm$0.33                               \\
                                             & (6min, 9.8GB)                            & (19min, 9.9GB)                          & \multicolumn{2}{c}{(9min, 6.0GB)}        & \multicolumn{2}{c}{(31min, 9.9GB)}                                              & \multicolumn{1}{c}{(1.9h, 9.9GB)}            \\ \midrule
            \multirow{2}{*}{\logfal}         & \multicolumn{1}{c}{\multirow{2}{*}{NA}}  & \multicolumn{1}{c}{\multirow{2}{*}{NA}} & \best{\textbf{87.77}$\pm$0.05}      &\best{\textbf{80.49}$\pm$0.00}      & \multicolumn{1}{l}{80.43$\pm$0.02}              & 73.04$\pm$0.02      & \multicolumn{1}{c}{\multirow{2}{*}{NA}}      \\
                                             & \multicolumn{1}{c}{}                     & \multicolumn{1}{c}{}                    & \multicolumn{2}{c}{(13min, 6.5GB)}        & \multicolumn{2}{c}{(45min, 9.9GB)}                                             & \multicolumn{1}{c}{}                         \\ \midrule
            \multirow{2}{*}{\eprd}           & 4.5512$\pm$0.0047                        & 5.0417$\pm$0.0014                       & 86.99$\pm$0.01       & 80.08$\pm$0.01     & \multicolumn{1}{l}{79.74$\pm$0.13}      & 72.46$\pm$0.06                       & 72.94$\pm$0.00                               \\
                                             & \multicolumn{1}{l}{(1.0h, 1.6GB)}        & (2.2h, 1.8GB)                           & \multicolumn{2}{c}{(2.2h, 1.7GB)}         & \multicolumn{2}{c}{(18h, 7.0GB)}                                               & (80h, 6.9GB)                                 \\ \midrule
            \multirow{2}{*}{\thsvm}          & 4.6431$\pm$0.0257                        & 6.0834$\pm$0.0847                       & 79.32$\pm$0.01      & 80.22$\pm$0.01      & \multicolumn{2}{c}{\multirow{2}{*}{Timeout}}                                   & \multicolumn{1}{c}{\multirow{2}{*}{Timeout}} \\
                                             & \multicolumn{1}{l}{(3.2h, 5.0GB)}        & (3.2h, 8.0GB)                           & \multicolumn{2}{c}{(15h, 7.8GB)}          & \multicolumn{2}{c}{}                                                           & \multicolumn{1}{c}{}                         \\ \midrule\midrule
            \multirow{2}{*}{\myname-{KRR}}   & \best{\textbf{4.4868}$\pm$0.0012}        & \scnd{\textbf{4.7170}$\pm$0.0007}       & 87.63$\pm$0.00      & 80.41$\pm$0.01      & \multicolumn{1}{l}{81.94$\pm$0.17}      & 74.03$\pm$0.14             & 73.32$\pm$0.01                               \\
                                             & (35min, \good{0.7GB})                    & (31min, \good{1.5GB})                   & \multicolumn{2}{c}{(25min, \good{1.2GB})} & \multicolumn{2}{c}{(1.0h, \good{1.9GB})}                                       & (2.1h, \good{5.3GB})                                \\ \midrule
            \multirow{2}{*}{\myname-{Huber}} & 4.5058$\pm$0.0109                        & \best{\textbf{4.7160}$\pm$0.0004}       & 87.64$\pm$0.00      & 80.41$\pm$0.01      & \multicolumn{1}{l}{81.83$\pm$0.27}      & 74.01$\pm$0.21                       & 73.66$\pm$0.01                               \\
                                             & (36min, \good{0.7GB})                    & (36min, \good{1.5GB})                   & \multicolumn{2}{c}{(23min, \good{1.2GB})} & \multicolumn{2}{c}{(57min, \good{1.9GB})}                                      & (2.1h, \good{5.3GB})                                \\ \midrule
            \multirow{2}{*}{\myname-{SVM}}   & 4.6004$\pm$0.0073                        & 4.8376$\pm$0.0342                       & 87.72$\pm$0.01      & \scnd{\textbf{80.44}$\pm$0.02}      & \best{\textbf{82.40}$\pm$0.06}        & \best{\textbf{74.41}$\pm$0.05}& \scnd{\textbf{74.47}$\pm$0.02}                      \\
                                             & (35min, \good{0.7GB})                    & (27min, \good{1.5GB})                   & \multicolumn{2}{c}{(25min, \good{1.2GB})} & \multicolumn{2}{c}{(56min, \good{1.9GB})}                                      & (2.0h, \good{5.3GB})                                \\ \midrule
            \multirow{2}{*}{\myname-{KLR}}   & \multicolumn{1}{c}{\multirow{2}{*}{NA}}  & \multicolumn{1}{c}{\multirow{2}{*}{NA}} & \scnd{\textbf{87.73}$\pm$0.01}      & 80.42$\pm$0.01      & \multicolumn{1}{l}{\scnd{\textbf{82.11}$\pm$0.03}}      & \scnd{\textbf{74.17}$\pm$0.01}              & \best{\textbf{74.88}$\pm$0.08}                             \\
                                             & \multicolumn{1}{c}{}                     & \multicolumn{1}{c}{}                    & \multicolumn{2}{c}{(1.1h, 1.7GB)}         & \multicolumn{2}{c}{(1.6h, 2.6GB)}                                              & (3.2h, 5.9GB)                                \\ \bottomrule
        \end{tabular}
    }
    \label{tab:exp}
\end{table*}

\section{Experiments}\label{sec:exp}
To highlight that \myname can obtain promising performance under a limited computational budget,
we conduct experiments on a machine with a single consumer GPU (NVIDIA RTX 3080, 10GB) and 64GB RAM.
The experiments \emph{always use single precision} unless otherwise specified.
The implementation\footnote{Code available at GitHub: \url{https://github.com/Apple-Zhang/Joker-paper}.} of \myname is based on PyTorch without extra acceleration libraries.

The used datasets cover both regression (MSD, HEPC) and classification tasks (SUSY, HIGGS, CIFAR-5M)
with the sample size ranging from $10^5$ to $10^7$.
They are frequently used in the literature of large-scale kernel methods,
and the details are summarized in Appendix~\ref{app:exp}.
The largest dataset, CIFAR-5M, has 10 classes and 5 million samples, each with 3072 dimensions, barely fitting within the machine's RAM.
All datasets are normalized using the z-score trick.

Using \myname's framework,
we implement three regressors: \textbf{KRR}, \textbf{Huber}, and \textbf{SVR},
and two classifiers: ($L_2$-)\textbf{SVC} and \textbf{KLR}.
The compared methods are their state-of-the-art counterparts,
including \falkon (KRR), \eprd (KRR), \logfal (KLR), \thsvm (SVC, SVR).
We utilize one-versus-rest (OVR) in \myname to support multi-class problems.
OVR is also available in \thsvm,
while \logfal only supports binary classification \citep{MeantiKernel2020} and thus is not applicable on CIFAR-5M, a 10-class dataset.
The kernels used in the experiments include the Gaussian and Laplacian kernels,
which are two prevalent choices in practice.
The regularization parameter $\lambda$ in \myname is tuned from $\{2^i:i=-7,-6,\cdots,7\}$ via grid search.
According to \citep{NutiniLets2022},
increasing the block size reduces the needed iterations for the convergence of BCD.
In most cases,
we employ the inexact \myname models with the block size $|\Bc|=512$ considering the trade-off between total training time and memory consumption.
Except for MSD,
we can employ the exact \myname with block size $|\Bc|=2048$ due to its relatively small sample size.
Exact \myname needs not to store RFF random matrix $\W$ and can afford a larger block size than the inexact \myname.
We also increase the block size to 1024 for \myname-KLR to accelerate convergence.
\falkon and \logfal can be run with $M=2.5\times 10^4$ within the limited memory.
Details of further parameter settings are shown in Appendix~\ref{app:exp}.

\subsection{Results and Analysis}
Table \ref{tab:exp} shows the result of the performance comparison.
Note that \thsvm and \myname-{SVM} both include SVR and SVC cases,
where SVR is applied to MSD and HEPC datasets, and SVC is applied to others.
We observe that \myname-based methods reach the lowest memory usage on all tested datasets.
To obtain equivalent performance in \citep{MeantiKernel2020} on HIGGS ($\approx 74.22\%$ with $M=10^5$, unaffordable for the used machine),
\myname only needs 1.9GB GPU memory, saving at least 95\% storage costs comparing \citep{MeantiKernel2020}.
Despite low memory, \myname still outperforms the state-of-the-art methods in most cases,
presenting almost no accuracy sacrifice.
\myname's time is significantly lower than \eprd and \thsvm.
\falkon-based methods are the fastest and have a substantial gap compared to \eprd and \thsvm.
However, \myname alleviates this gap.
Specifically,
\eprd and \thsvm use at least 10x training time compared to \falkon on MSD,
and \myname reduces it to 5x time.
On the other hand, \eprd needs 36x time (18 hours) of \falkon (0.5 hour),
and \myname reduces such gap to 2x time.
Thus, \myname obtains comprehensively better results under the same hardware conditions,
and achieves a good trade-off between memory, time, and accuracy,
demonstrating that \myname is highly scalable for large-scale learning.


The performance of various models in \myname is worth noting.
The exact \myname-KRR obtains the best performance on MSD using less than 1GB of memory,
showing the efficacy of the exact \myname on relatively small datasets.
\myname-Huber surpasses other regression models on the HEPC dataset,
which may benefit from its robustness.
\myname-SVM shows outstanding performance on the classification tasks.
\thsvm is the most time-consuming method.
Particularly, it converges slowly when its penalty parameter $c$ (equivalent to $1/\lambda$ in \eqref{eq:base-model}) is large.
Such inefficiency may stem from the SMO solver,
which updates only two variables at each iteration,
significantly lower than DBCD-TR.
In comparison, \myname-SVM converges fast when utilizing large penalty $c=1/\lambda$ and always outperforms \thsvm with less time and memory.
We use a larger block size, $|\Bc|=1024$, in \myname-KLR rather than $|\Bc|=512$ in other inexact \myname models to overcome the slow convergence caused by the ill-conditioned Hessian.
Thus,
\myname-KLR takes more time and memory than other \myname models,
Despite the ill-conditioning issue,
\myname-KLR never meets numerical error degrading the performance in single precision,
and still obtains promising accuracy.

\begin{figure}[t]
    \centering
    \includegraphics[width=0.4\textwidth]{./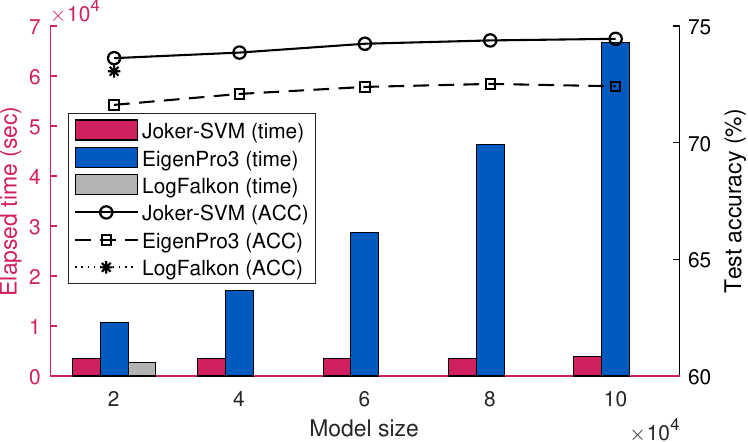}
    \caption{Performance versus the model size on HIGGS.}
    \label{fig:time-acc}
\end{figure}

Figure \ref{fig:time-acc} illustrates the time and performance versus the model size.
In general, a larger model size yields better performance.
\myname obtains the best accuracy even with the smallest model.
The time of \eprd increases rapidly with the model size,
while \myname's time is almost unaffected.
In spite of the lowest elapsed time,
\falkon fails to scale up the model size due to its expensive memory cost.

\begin{figure}[t]
    \centering
    \subfigure[HEPC]{
        \includegraphics[width=0.22\textwidth]{./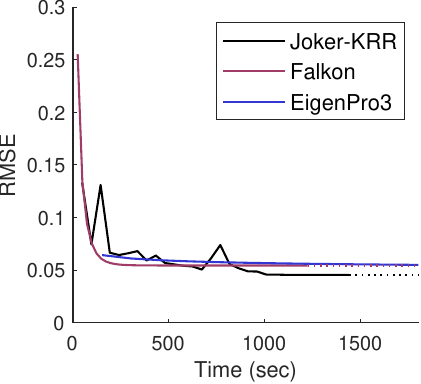}
    }
    \subfigure[HIGGS]{
        \includegraphics[width=0.22\textwidth]{./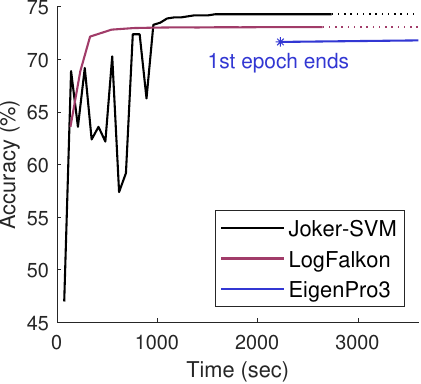}
    }
    \caption{Test performance versus time.}
    \label{fig:perf-time}
\end{figure}

A comparison of the training progress of different models is shown in Figure \ref{fig:perf-time}.
The curves of \myname are not monotonical since \myname optimizes the dual problem,
which does not guarantee the monotonic decrease of the primal objective.
Nonetheless, \myname eventually converges with the best performance.
HEPC is a challenging regression dataset,
where \falkon obtains suboptimal results using the single precision ($\text{RMSE}\approx17\times10^{-2}$).
It is mitigated by using double precision ($\text{RMSE}\approx5.4\times10^{-2}$).
The reason may be the loss of precision caused by matrix decomposition,
while \myname and \eprd are free of this issue and obtain better results than \falkon.
Nonetheless, the slow convergence of \eprd makes it less competitive than \myname.
\eprd just finished the first epoch on HIGGS when \myname and \logfal are nearly convergent.

We illustrate the convergence pattern of \myname in Figure \ref{fig:conv}.
The proposed method optimizes \eqref{eq:dual}, i.e., minimizing the negative of the dual objective.
Therefore, the dual objective keeps increasing, which fits the pattern shown in the figures.
The primal objective is always larger than or equal to the dual objective due to weak duality.
As mentioned before, the primal objective and loss may not decrease monotonically.
Nonetheless, Figure \ref{fig:conv} shows their major trend of decreasing.
With sufficient iterations, dual and primal objectives tend to get close and merge,
indicating the convergence of \myname.

\section{Conclusion and Future Directions}\label{sec:concl}
Scalability is a crucial issue for kernel methods.
In this paper, we propose a novel optimization framework for large-scale kernel methods named \myname,
breaking the memory bottleneck and pushing the development of models beyond KRR.
The proposed solver, DBCD-TR, provides a modern and efficient solution to dual optimization in kernel machines.
We show the effectiveness of \myname on a variety of kernel methods, including KRR, SVM, KLR, etc.
Even with consumer hardware and limited memory,
\myname obtains state-of-the-art performance within acceptable training time,
making the low-cost kernel methods possible in practice.

\begin{figure}[t]
    \centering
    \subfigure[MSD, \bad{exact} KRR]{
        \includegraphics[width=0.35\textwidth]{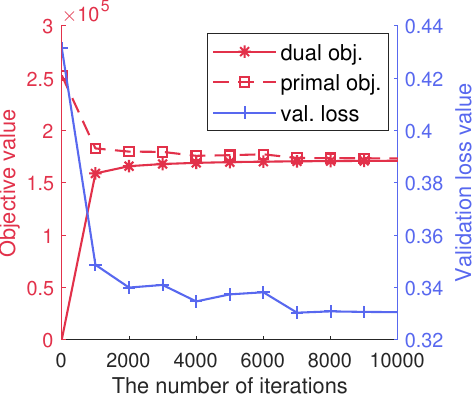}
    }
    \subfigure[HIGGS, \emm{inexact} KLR]{
        \includegraphics[width=0.35\textwidth]{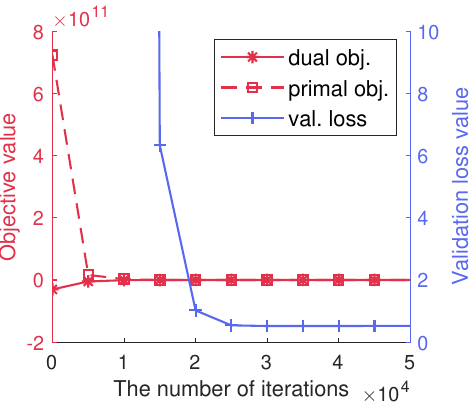}
    }
    \caption{The primal, dual objectives, and validation loss of \myname versus the iteration steps.}
    \label{fig:conv}
\end{figure}

Regarding future work,
generalizations of model \eqref{eq:base-model} can be explored, e.g., multi-class SVM proposed by \cite{CrammerAlgorithmic2001} and softmax regression,
whose equality-constrained dual problems are the major issue.
In addition, the convergence speed of DBCD-TR is still unclear.
Existing theoretical results, e.g., \citep{richtarik2014iteration,NutiniCoordinate2015,NutiniLets2022}, suggest that DBCD-TR has at least a linear convergence rate.
A sharper rate is worth exploring.
We feel it is non-trivial and leave it as future work.

\ifdefined\isarxiv

\bibliographystyle{alpha}
\bibliography{ref}
\else

\bibliography{ref}

\newcommand{\etalchar}[1]{$^{#1}$}
\begin{thebibliography}{GPW{\etalchar{+}}18}

\bibitem[ABO22]{aravkinProximalQuasiNewtonTrustRegion2022}
Aleksandr~Y. Aravkin, Robert Baraldi, and Dominique Orban.
\newblock A proximal quasi-{Newton} trust-region method for nonsmooth regularized optimization.
\newblock {\em SIAM Journal on Optimization}, 32(2):900--929, 2022.

\bibitem[ABP23]{AbedsoltanLarge2023}
Amirhesam Abedsoltan, Mikhail Belkin, and Parthe Pandit.
\newblock Toward large kernel models.
\newblock In {\em the 40th {{International Conference}} on {{Machine Learning}} ({{ICML}})}, volume 202, pages 61--78. PRML, 2023.

\bibitem[AMPB24]{AbedsoltanFast2024}
Amirhesam Abedsoltan, Siyuan Ma, Parthe Pandit, and Mikhail Belkin.
\newblock Fast training of large kernel models with delayed projections, November 2024.

\bibitem[BBV04]{boyd2004CVXOPT}
Stephen Boyd, Stephen~P Boyd, and Lieven Vandenberghe.
\newblock {\em Convex optimization}.
\newblock {Cambridge University press}, 2004.

\bibitem[Bec17]{beck2017firstorder}
Amir Beck.
\newblock {\em First-order methods in optimization}.
\newblock SIAM, 2017.

\bibitem[BEWL11]{Bertin-Mahieux2011}
Thierry {Bertin-Mahieux}, Daniel~P.W. Ellis, Brian Whitman, and Paul Lamere.
\newblock The million song dataset.
\newblock In {\em Proceedings of the 12th International Conference on Music Information Retrieval ({{ISMIR}})}, 2011.

\bibitem[BK24]{BaraldiLocal2024}
Robert~J. Baraldi and Drew~P. Kouri.
\newblock Local convergence analysis of an inexact trust-region method for nonsmooth optimization.
\newblock {\em Optimization Letters}, 18(3):663--680, April 2024.

\bibitem[BK25]{BaraldiEfficient2025}
Robert~J. Baraldi and Drew~P. Kouri.
\newblock Efficient proximal subproblem solvers for a nonsmooth trust-region method.
\newblock {\em Computational Optimization and Applications}, 90(1):193--226, January 2025.

\bibitem[BSW14]{baldi2014searching}
Pierre Baldi, Peter Sadowski, and Daniel Whiteson.
\newblock Searching for exotic particles in high-energy physics with deep learning.
\newblock {\em Nature communications}, 5(1):4308, 2014.

\bibitem[CL11]{CC01a}
Chih-Chung Chang and Chih-Jen Lin.
\newblock {LIBSVM}: A library for support vector machines.
\newblock {\em ACM Transactions on Intelligent Systems and Technology}, 2:27:1--27:27, 2011.

\bibitem[CS01]{CrammerAlgorithmic2001}
Koby Crammer and Yoram Singer.
\newblock On the algorithmic implementation of multiclass kernel-based vector machines.
\newblock {\em Journal of Machine Learning Research}, 2:265--292, 2001.

\bibitem[CV95]{CORTESSupportVector1995}
Corinna Cortes and Vladimir Vapnik.
\newblock Support-{{Vector Networks}}.
\newblock {\em Machine Leaming}, 20(3):273--297, 1995.

\bibitem[CX20]{ChenDeep2020}
Lin Chen and Sheng Xu.
\newblock Deep neural tangent kernel and laplace kernel have the same {{RKHS}}.
\newblock In {\em the 8th International {{Conference}} on {{Learning Representations}} ({{ICLR}})}, October 2020.

\bibitem[DQ23]{DaiReHLine2023}
Ben Dai and Yixuan Qiu.
\newblock {{ReHLine}}: Regularized composite {ReLU-ReHU} loss minimization with linear computation and linear convergence.
\newblock In {\em Proceedings of the 37th {{Conference}} on {{Neural Information Processing Systems}} ({{NeurIPS}})}, volume~36, pages 14366--14386. Curran Associates, Inc., December 2023.

\bibitem[dvN{\etalchar{+}}17]{2017GPflow}
Alexander~G. {de G. Matthews}, Mark {van der Wilk}, Tom Nickson, Keisuke Fujii, Alexis Boukouvalas, Pablo {Le{\'o}n-Villagr{\'a}}, Zoubin Ghahramani, and James Hensman.
\newblock {GPflow}: A {Gaussian} process library using {TensorFlow}.
\newblock {\em Journal of Machine Learning Research}, 18(40):1--6, 2017.

\bibitem[DXH{\etalchar{+}}14]{DaiScalable2014}
Bo~Dai, Bo~Xie, Niao He, Yingyu Liang, Anant Raj, Maria-Florina~F Balcan, and Le~Song.
\newblock Scalable {{Kernel Methods}} via {{Doubly Stochastic Gradients}}.
\newblock In {\em Proceedings of the 28th {{International Conference}} on {{Neural Information Processing Systems}} ({NeurIPS})}, volume~27. Curran Associates, Inc., 2014.

\bibitem[GB84]{GafniTwoMetric1984}
Eli~M. Gafni and Dimitri~P. Bertsekas.
\newblock Two-metric projection methods for constrained optimization.
\newblock {\em SIAM Journal on Control and Optimization}, 22(6):936--964, 1984.

\bibitem[GJK18]{GarreauLarge2018}
Damien Garreau, Wittawat Jitkrittum, and Motonobu Kanagawa.
\newblock Large sample analysis of the median heuristic, October 2018.

\bibitem[GPW{\etalchar{+}}18]{Gardner2018GPyTorch}
Jacob Gardner, Geoff Pleiss, Kilian~Q Weinberger, David Bindel, and Andrew~G Wilson.
\newblock {GPyTorch}: Blackbox matrix-matrix gaussian process inference with {GPU} acceleration.
\newblock In {\em Proceedings of the 32nd Conference on Neural Information Processing Systems (NeurIPS)}, volume~31, pages 7576--7586. Curran Associates, Inc., 2018.

\bibitem[GYK{\etalchar{+}}20]{GeifmanSimilarity2020}
Amnon Geifman, Abhay Yadav, Yoni Kasten, Meirav Galun, David Jacobs, and Basri Ronen.
\newblock On the similarity between the {{Laplace}} and neural tangent kernels.
\newblock In {\em Proceedings of the 34th {{Conference}} on {{Neural Information Processing Systems}} ({{NeurIPS}})}, volume~33, pages 1451--1461. Curran Associates, Inc., 2020.

\bibitem[HCL{\etalchar{+}}08]{HsiehDual2008}
Cho-Jui Hsieh, Kai-Wei Chang, Chih-Jen Lin, S.~Sathiya Keerthi, and S.~Sundararajan.
\newblock A dual coordinate descent method for large-scale linear {{SVM}}.
\newblock In {\em the 25th International Conference on {{Machine}} Learning (ICML)}, pages 408--415. ACM Press, 2008.

\bibitem[JGH18]{JacotNeural2018}
Arthur Jacot, Franck Gabriel, and Clement Hongler.
\newblock Neural tangent kernel: Convergence and generalization in neural networks.
\newblock In {\em Proceedings of the {{Conference}} on the 32nd {{Neural Information Processing Systems}} (NeurIPS)}, volume~31, pages 8571--8580. Curran Associates, Inc., 2018.

\bibitem[LN23]{LanthalerError2023}
Samuel Lanthaler and Nicholas~H. Nelsen.
\newblock Error {{Bounds}} for {{Learning}} with {{Vector-Valued Random Features}}.
\newblock In {\em Proceedings of the 37th {{Conference}} on {{Neural Information Processing Systems}} ({{NeurIPS}})}, volume~36, pages 71834--71861, 2023.

\bibitem[LPA{\etalchar{+}}24]{linStochasticGradientDescent2023}
Jihao~Andreas Lin, Shreyas Padhy, Javier Antoran, Austin Tripp, Alexander Terenin, Csaba Szepesvari, Jos{\'e}~Miguel {Hern{\'a}ndez-Lobato}, and David Janz.
\newblock Stochastic gradient descent for gaussian processes done right.
\newblock In {\em the 12th International Conference on Learning Representations (ICLR)}, 2024.

\bibitem[MB17]{maDivingShallowsComputational2017}
Siyuan Ma and Mikhail Belkin.
\newblock Diving into the shallows: A computational perspective on large-scale shallow learning.
\newblock In {\em Proceedings of the 31st Conference on Neural Information Processing Systems (NeurIPS)}, volume~30, pages 3778--3787, 2017.

\bibitem[MB19]{MLSYS2019Eigenpro2}
Siyuan Ma and Mikhail Belkin.
\newblock Kernel machines that adapt to gpus for effective large batch training.
\newblock In {\em Proceedings of Machine Learning and Systems (MLSys)}, volume~1, pages 360--373, 2019.

\bibitem[MBR19]{Marteau-FereyGlobally2019}
Ulysse {Marteau-Ferey}, Francis Bach, and Alessandro Rudi.
\newblock Globally convergent newton methods for ill-conditioned generalized self-concordant losses.
\newblock In {\em Proceedings of the 33rd Conference on {{Neural Information Processing Systems}} ({NeurIPS})}, volume~32, pages 7636--7646. Curran Associates, Inc., 2019.

\bibitem[MCRR20]{MeantiKernel2020}
Giacomo Meanti, Luigi Carratino, Lorenzo Rosasco, and Alessandro Rudi.
\newblock Kernel methods through the roof: Handling billions of points efficiently.
\newblock In {\em Proceedings of the 34th Conference on {{Neural Information Processing Systems}} (NeurIPS)}, volume~33, pages 14410--14422. Curran Associates, Inc., 2020.

\bibitem[NLS22]{NutiniLets2022}
Julie Nutini, Issam Laradji, and Mark Schmidt.
\newblock Let's make block coordinate descent converge faster: Faster greedy rules, message-passing, active-set complexity, and superlinear convergence.
\newblock {\em Journal of Machine Learning Research}, 23(131):1--74, 2022.

\bibitem[NNS21]{nakkirandeep2021}
Preetum Nakkiran, Behnam Neyshabur, and Hanie Sedghi.
\newblock The deep bootstrap framework: Good online learners are good offline generalizers.
\newblock In {\em the 9th International Conference on Learning Representations (ICLR)}, 2021.

\bibitem[NSL{\etalchar{+}}15]{NutiniCoordinate2015}
Julie Nutini, Mark Schmidt, Issam Laradji, Michael Friedlander, and Hoyt Koepke.
\newblock Coordinate descent converges faster with the {Gauss-Southwell} rule than random selection.
\newblock In {\em the 32nd {{International Conference}} on {{Machine Learning}} ({{ICML}})}, pages 1632--1641. PMLR, June 2015.

\bibitem[Pla98]{Platt1998SMO}
John~C. Platt.
\newblock Sequential minimal optimization: A fast algorithm for training support vector machines.
\newblock Technical report, {Advances in Kernel Methods - Support Vector Learning}, 1998.

\bibitem[QRTF16]{QuSDNA2016}
Zheng Qu, Peter Richtarik, Martin Takac, and Olivier Fercoq.
\newblock {{SDNA}}: Stochastic dual newton ascent for empirical risk minimization.
\newblock In {\em {{the}} 33rd {{International Conference}} on {{Machine Learning}}}, pages 1823--1832. PMLR, June 2016.

\bibitem[RCCR18]{RudiFast2018}
Alessandro Rudi, Daniele Calandriello, Luigi Carratino, and Lorenzo Rosasco.
\newblock On fast leverage score sampling and optimal learning.
\newblock In {\em Proceedings of the 32nd {{Conference}} on {{Neural Information Processing Systems}} (NeurIPS)}, volume~31, pages 5672--5682. Curran Associates, Inc., 2018.

\bibitem[RCR17]{RudiFALKON2017}
Alessandro Rudi, Luigi Carratino, and Lorenzo Rosasco.
\newblock {{FALKON}}: An optimal large scale kernel method.
\newblock In {\em Proceedings of the 31st {{Conference}} on {{Neural Information Processing Systems}} ({{NeurIPS}})}, volume~30, pages 3888--3898. Curran Associates, Inc., 2017.

\bibitem[RFU24]{RathoreHave2024}
Pratik Rathore, Zachary Frangella, and Madeleine Udell.
\newblock Have {{ASkotch}}: A neat solution for large-scale kernel ridge regression.
\newblock {\em arXiv preprint arXiv:2407.10070}, 2024.

\bibitem[RR07]{RahimiRandom2007}
Ali Rahimi and Ben Recht.
\newblock Random features for large-scale kernel machines.
\newblock In {\em Proceedings of the 21st {{Conference}} on {{Neural Information Processing Systems}} ({{NeurIPS}})}, volume 14 E, pages 467--470, 2007.

\bibitem[RT14]{richtarik2014iteration}
Peter Richt{\'a}rik and Martin Tak{\'a}{\v{c}}.
\newblock Iteration complexity of randomized block-coordinate descent methods for minimizing a composite function.
\newblock {\em Mathematical Programming}, 144(1):1--38, 2014.

\bibitem[Sch10]{schmidt2010graphical}
Mark Schmidt.
\newblock Graphical model structure learning with l1-regularization.
\newblock {\em PhD Thesis, The University of British Columbia}, page~26, 2010.

\bibitem[SKS11]{SchmidtProjected2011}
Mark Schmidt, Dongmin Kim, and Suvrit Sra.
\newblock Projected newton-type methods in machine learning.
\newblock In {\em Optimization for {{Machine Learning}}}, pages 305--329. The MIT Press, September 2011.

\bibitem[Sor82]{SorensenNewtons1982}
D.~C. Sorensen.
\newblock Newton's {{Method}} with a {{Model Trust Region Modification}}.
\newblock {\em SIAM Journal on Numerical Analysis}, 19(2):409--426, April 1982.

\bibitem[SS98]{ScholkopfLearning1998}
Bernhard Sch{\"o}lkopf and Alexender~Jorhannes Smola.
\newblock {\em Learning with Kernels: {{Support}} Vector Machines, Regularization, Optimization, and Beyond}.
\newblock MIT press, 1998.

\bibitem[Ste83]{SteihaugConjugate1983a}
Trond Steihaug.
\newblock The conjugate gradient method and trust regions in large scale optimization.
\newblock {\em SIAM Journal on Numerical Analysis}, 20(3):626--637, 1983.

\bibitem[SZ13]{Shalev-ShwartzStochastic2013}
Shai {Shalev-Shwartz} and Tong Zhang.
\newblock Stochastic dual coordinate ascent methods for regularized loss minimization.
\newblock {\em Journal of Machine Learning Research}, 14(16):567--599, 2013.

\bibitem[TRVR16]{tu2016large}
Stephen Tu, Rebecca Roelofs, Shivaram Venkataraman, and Benjamin Recht.
\newblock Large scale kernel learning using block coordinate descent.
\newblock {\em arXiv preprint arXiv:1602.05310}, 2016.

\bibitem[WS00]{WilliamsUsing2000}
Christopher Williams and Matthias Seeger.
\newblock Using the {{Nystr{\"o}m}} method to speed up kernel machines.
\newblock In {\em Proceedings of the 14th {{Conference}} on {{Neural Information Processing Systems}} ({{NeurIPS}})}, volume~13, pages 682--688. MIT Press, 2000.

\bibitem[WSL{\etalchar{+}}18]{WenThunderSVM2018}
Zeyi Wen, Jiashuai Shi, Qinbin Li, Bingsheng He, and Jian Chen.
\newblock {{ThunderSVM}}: A fast {SVM} library on {{GPUs}} and {{CPUs}}.
\newblock {\em Journal of Machine Learning Research}, 19(21):1--5, 2018.

\bibitem[YHL11]{YuDual2011}
Hsiang-Fu Yu, Fang-Lan Huang, and Chih-Jen Lin.
\newblock Dual coordinate descent methods for logistic regression and maximum entropy models.
\newblock {\em Machine Learning}, 85(1):41--75, October 2011.

\bibitem[YLM{\etalchar{+}}12]{YangNystrom2012}
Tianbao Yang, Yu-feng Li, Mehrdad Mahdavi, Rong Jin, and Zhi-Hua Zhou.
\newblock Nystr{\"o}m method vs random {Fourier} features: A theoretical and empirical comparison.
\newblock In {\em {{Proceedings}} of the 26th {{Conference}} on {{Neural Information Processing Systems}} ({{NeurIPS}})}, volume~25. Curran Associates, Inc., 2012.

\end{thebibliography}

\fi

\newpage
\onecolumn
\appendix
\begin{center}
	\textbf{\LARGE Appendix }
\end{center}

\ifdefined\isarxiv

\else

{
\hypersetup{linkcolor=black}
\tableofcontents
\bigbreak
\bigbreak
\bigbreak
\bigbreak
\bigbreak
}

\fi

\renewcommand{\theequation}{A.\arabic{equation}}
\setcounter{equation}{0}
\renewcommand{\thetable}{A.\arabic{table}}
\setcounter{table}{0}
\section{Proof of Theorem \ref{thm:primal-to-dual}} \label{app:proof:thm-dual}
\begin{proof}
Note that problem \eqref{eq:base-model} is equivalent to:
\begin{equation}
    \min_{\btheta,\ubf}\frac12\langle\btheta,\btheta\rangle+\frac{1}{\lambda}\sum_{i=1}^n\xi_{y_i}(u_i),\quad\sto \langle\btheta,\bvphi(\x_i)\rangle=u_i.
\end{equation}
Introducing the Lagrange multipliers $\balpha\in\Rb^n$,
we can obtain the Lagrangian function:
\begin{equation}
    L(\btheta,\ubf,\balpha)=\frac12\langle\btheta,\btheta\rangle+\frac{1}{\lambda}\sum_{i=1}^n\xi_{y_i}(u_i)+\sum_{i=1}^n\alpha_i(\langle\btheta,\bvphi(\x_i)\rangle-u_i).
\end{equation}
The first-order Karush-Kuhn-Tucker (KKT) condition gives
\begin{align}
    \btheta^\star&=\arg\min_{\btheta}L(\btheta,\ubf,\balpha)=\sum_{i=1}^n\alpha_i\bvphi(\x_i),\label{eq:proof-dual-theta}\\
    \ubf^\star&=\arg\min_{\ubf}L(\btheta,\ubf,\balpha)
    =\arg\min_{\ubf}\sum_{i=1}^n\frac{1}{\lambda}\xi_{y_i}(u_i)+\alpha_iu_i
    =-\frac{1}{\lambda}\arg\max_{\ubf}\sum_{i=1}^n-\lambda\alpha_iu_i-\xi_{y_i}(u_i).\label{eq:proof-dual-u}
\end{align}
Recall the definition of the Fenchel conjugate,
the maximum in \eqref{eq:proof-dual-u} is $\sum_{i=1}^n\xi_{y_i}^*(-\lambda\alpha_i)$.
Therefore, the dual problem is
\begin{equation}
    \max_{\balpha}\min_{\ubf,\btheta}L(\btheta,\ubf,\balpha)=\max_{\balpha}-\frac12\sum_{i=1}^n\sum_{j=1}^n\alpha_i\alpha_k\langle\bvphi(\x_i),\bvphi(\x_j)\rangle-\frac{1}{\lambda}\sum_{i=1}^n\xi_{y_i}^*(-\lambda\alpha_i),\quad\sto-\lambda\alpha_i\in\dom\xi_{y_i}^*.
\end{equation}
Or equivalently, by applying a negative sign to the objective function, we have the minimization problem:
\begin{equation}
    \balpha^\star=\arg\min_{\balpha\in\Omega}\frac12\balpha^\T\K\balpha+\frac{1}{\lambda}\sum_{i=1}^n\xi_{y_i}^*\left(-\lambda\alpha_i\right),\quad\text{where }\Omega=\{\balpha\in\Rb^n:-\lambda\alpha_i\in\dom\xi_{y_i}^*\}.\label{eq:proof-dual}
\end{equation}
Equations \eqref{eq:proof-dual-theta} and \eqref{eq:proof-dual} give the primal-dual relationship, which completes the proof.
\end{proof}

\section{Details of Dual Block Coordinate Descent with Trust Region (DBCD-TR)}\label{app:dbcd-tr}

\subsection{Strategies of Block Coordinate Descent}\label{app:sec:BCD}
As pointed out by \citet{NutiniLets2022},
there are several aspects to consider when designing the BCD algorithm:
\vspace{-8pt}
\begin{itemize}
    \item \emph{(i) How to obtain the block candidates?}
    \item \emph{(ii) How to select the block size?}
    \item \emph{(iii) How to choose the block to update?}
    \item \emph{(iv) How to update the block?}
\end{itemize}
In fact, (iv) has been discussed in the main paper.
Here we elaborate on the other three aspects.

\textbf{(i):} We adopt the \emph{fixed block} strategy, i.e., the potential selected block is static during the optimization.
Specifically, the block candidates are obtained by partitioning the index set $\{1,\cdots,n\}$.
Although \citet{NutiniLets2022} suggests ``variable blocks'' (the block is dynamically constructed during the optimization) for its faster convergence,
its computational expense is higher (e.g., the gradient of all samples should be maintained during optimization).
So we choose the fixed block strategy for its simplicity,
which also shows satisfactory efficiency in practice.

\textbf{(ii):} Increasing block size generally decreases the iterations to reach convergence of BCD \citep{NutiniLets2022},
but it may increase the computation cost and memory usage of each iteration.
So we should find a trade-off between the storage $S=|\Bc|^2$ and the time $T_{\mathsf{BCD}}t_{\mathsf{inner}}$,
where $T_{\mathsf{BCD}}$ is the number of BCD iterations and $t_{\mathsf{inner}}$ the time complexity of each inner iteration. 
We have shown that $t_{\mathsf{inner}}=O(|\Bc|^2)$ in the main paper.
For $T_{\mathsf{BCD}}$, \citet{NutiniLets2022} points out that larger block sizes generally lead to faster convergence.
\citep{richtarik2014iteration,NutiniCoordinate2015} show a linear convergence rate of proximal gradient descent for $\sigma$-strongly convex functions:
\begin{equation}
    T_{\mathsf{BCD}}\leq O\left(\frac{Ln}{\sigma|\Bc|}\log\left(\frac{1}{\varepsilon}\right)\right),\label{eq:app:BCD}
\end{equation}
where $L$ is the sum of Lipschitz constants of all blocks.
However, this result is not tight.
Borrowing the analysis from \citep{NutiniLets2022},
DBCD-TR adopts the ``optimal updates'' paradigm,
which should be faster than \eqref{eq:app:BCD}, even possibly superlinear, but the precise rate is still unclear.
Despite this,
we still use \eqref{eq:app:BCD} as a tool to analyze the block size setting. 
In our experiments, we find that $|\Bc|=512$ is a balanced choice for KRR, SVC, SVR and Huber.
However, because of the ill-conditioned problem, i.e., extremely large $L$ in \eqref{eq:app:BCD},
the convergence of \myname-KLR is slower than other \myname models.
To overcome it, except for the numerical techniques mentioned in the main paper,
increasing the block size is also feasible (by reducing the upper bound of $T_{\mathsf{BCD}}$).
Regarding this,
we set $|\Bc|=1024$ for \myname-KLR in our experiments.

\textbf{(iii):} Based on the fixed block strategy,
we randomly select the block from the candidates with equal probability.
Its advantage over the cylic approach is that it can avoid the potential bias of the fixed selection order.
The alternative and probably better strategy is the greedy approach,
e.g. Gauss-Southwell rules presented in \citep{NutiniLets2022}.
However, they need to evaluate the gradient (and possibly the Hessian) of all blocks,
which is expensive in the large-scale data scenario.

\subsection{Discussion of Nonsmoothness: SVR as an Example}\label{app:sec:svr}
Nonsmoothness is an intrinsic issue for trust region methods.
The universal solutions include proximal gradient descent \citep{aravkinProximalQuasiNewtonTrustRegion2022} and proximal Newton projected step (sometimes intractable) \citep{NutiniLets2022}.
$\ell_1$-norm and piecewise linear functions are typical examples of nonsmooth functions.
The dual problem of SVR is a representative one of the former:
\begin{equation}
    \min_{\balpha}\frac12 \balpha^\T\K\balpha -\y^\T\balpha+\varepsilon\|\balpha\|_1,\quad \sto \|\balpha\|_\infty\leq\frac1\lambda.\label{eq:app:svr}
\end{equation}
A commonly used trick was used in \citep{CC01a} and \citep{schmidt2010graphical}.
That is, take $\balpha^+,\balpha^-\geq\bm0$ such that $\balpha=\balpha^+-\balpha^-$,
and obtain the following problem:
\begin{equation}
    \min_{\balpha^+,\balpha^-}\frac12 (\balpha^+-\balpha^-)^\T\K(\balpha^+-\balpha^-) -(\balpha^+-\balpha^-)^\T\y+\varepsilon\onef^\T(\balpha^++\balpha^-),\quad\sto0\leq\alpha_i^+,\alpha_i^-\leq \frac1\lambda.\label{eq:app:svr-pn}
\end{equation}
It can be proven that the optimal solution to \eqref{eq:app:svr} and \eqref{eq:app:svr-pn} satisfies $\alpha_i^+=\max\{0,\alpha_i\}$, $\alpha_i^-=\max\{0,-\alpha_i\}$.
Then the SVR problem becomes a quadratic programming problem that can be solved with DBCD-TR.


On the other hand, we provide a simpler strategy. Minimizing $J(\balpha_\Bc+\s)$ is equivalent to:
\begin{equation}
    \min_{\s\in\Omega}\frac12\s^\T\K_{\Bc,\Bc}\s + \s^\T(\K_{\Bc,:}\balpha-\y_{\Bc}) + \varepsilon\|\balpha_\Bc+\s\|_1,\quad\sto \|\s\|_2\leq\Delta,
\end{equation}
where $\Omega=\{\s:\|\alpha_\Bc+\s\|_\infty\leq\lambda^{-1}\}$ is the feasible set.
Now we assume $\|\balpha_\Bc+\s\|_1\approx\|\balpha_\Bc\|_1+\sign(\balpha_\Bc)^\T\s$.
The equality holds when all elements $\balpha_\Bc$ and $\balpha_\Bc+\s$ have the same sign,
and the errors occur on the different signed elements.
Therefore, the trust region size should be sufficiently small to keep the sign consistency as much as possible.
Then the trust region subproblem becomes the following:
\begin{equation}
    \min_{\s\in\Omega}\frac12\s^\T\K_{\Bc,\Bc}\s+\s^\T(\K_{\Bc,:}\balpha-\y_{\Bc}+\sign(\balpha_\Bc)), \quad\sto \|\s\|_2\leq\Delta,
\end{equation}
which falls back into the framework discussed in the main paper with $\Q=\K_{\Bc,\Bc},g=\K_{\Bc,:}\balpha_\Bc-\y_{\Bc}+\sign(\balpha_\Bc)$.

\section{Details of Experiments}\label{app:exp}

\subsection{Datasets}
We evaluate the models with the datasets that are commonly used in the literature on kernel methods.
In these datasets, MSD, SUSY, and HIGGS were used in \citep{RudiFALKON2017,MeantiKernel2020}, etc.
The HEPC dataset was used in \citep{linStochasticGradientDescent2023}, and the CIFAR-5M dataset was benchmarked in \citep{AbedsoltanLarge2023}.
Like most of the previous works,
the z-score normalization is always applied to make data with zero mean and unit variance.
The information on the datasets is summarized in Table \ref{tab:datasets}.
\begin{table}[h]
    \centering
    \caption{Summary of datasets used in experiments}
    \begin{tabular}{c|c|c|c|c|c}
        \toprule
        \textbf{Dataset} & \textbf{Task} & $n$ & $d$ & \textbf{Data Split} & \textbf{Metrics} \\ \midrule
        MSD & Regression & $5.1\times 10^5$ & 90 & 90\% train, 10\% test & Relative Error (\downa) \\
        HEPC & Regression & $2.05\times 10^6$ & 11 & 90\% train, 10\% test & RMSE (\downa) \\
        SUSY & Binary Classification & $5.0\times 10^6$ & 18 & 80\% train, 20\% test & AUC (\upa), Accuracy (\upa) \\
        HIGGS & Binary Classification & $1.1\times 10^7$ & 28 & 80\% train, 20\% test & AUC (\upa), Accuracy (\upa) \\
        CIFAR-5M & 10-class Classification & $5.0\times 10^6$ & 3072 & 80\% train, 20\% test & Accuracy (\upa) \\ \bottomrule
    \end{tabular}
    \label{tab:datasets}
\end{table}

\begin{itemize}
    \item Million-song dataset (MSD) \citep{Bertin-Mahieux2011}. This dataset contains audio features for year prediction. Available at \url{https://archive.ics.uci.edu/dataset/203/yearpredictionmsd}.
    \item Household Electric Power Consumption (HEPC) dataset:
    It is the same as the ``HouseElec'' dataset used in \citep{linStochasticGradientDescent2023}, available using the python package \url{https://github.com/treforevans/uci_datasets}.
    \item Supersymmetric particle classification (SUSY) dataset \citep{baldi2014searching}: This is a binary classification task to distinguish between supersymmetric particles and background process.
    Available at \url{https://archive.ics.uci.edu/dataset/279/susy}.
    \item HIGGS dataset \citep{baldi2014searching}: This is a binary classification task to distinguish between the Higgs boson and the background process. Available at \url{https://archive.ics.uci.edu/dataset/280/higgs}.
    \item CIFAR-5M dataset \citep{nakkirandeep2021}: This is a generated dataset based on CIFAR-10. Available at \url{https://github.com/preetum/cifar5m}.
\end{itemize}


\subsection{Implementation details}

We implement RFF of two widely-used kernels, the Gaussian and Laplacian, in inexact \myname.
RFF is constructed by the following formulas:
\begin{equation}
    \bpsi(\x)=\sqrt{\frac{2}{M}}\cos(\W\x+\bbf),
\end{equation}
where $\W\in\Rb^{M\times d}$ and $\bbf\in\Rb^{M}$ are random matrices.
Each element of $\bbf$ is always sampled from the uniform distribution $U_{[0,2\pi]}$ independently.
The distribution of $\W$ depends on the kernel:
\begin{itemize}
    \item 
    \textbf{Gaussian kernel:} Each element $w_{ij}$ is sampled independently from Gaussian distribution with zero mean and variance $\sigma^{2}$:
    \begin{equation}
        \Kc(\x,\x')=\exp\left(-\frac{\|\x-\x'\|^2}{2\sigma^2}\right),\quad p(w)=\mathcal{N}(0,\sigma^{2}),
    \end{equation}
    \item
    \textbf{Laplacian kernel:} Each element $w_{ij}$ is sampled independently from the Cauchy distribution with scale $1/\sigma$: 
    \begin{equation}
        \Kc(\x,\x')=\exp\left(-\frac{\|\x-\x'\|_1}{\sigma}\right),\quad p(w)=\frac{\sigma}{\pi(w^2\sigma^2+1)}.
    \end{equation}
\end{itemize}
In our experiments, we implement five \myname models: KRR, Huber, SVC, SVR, and KLR.
In the main paper, we merge SVC and SVR into \myname-SVM.
The summary is shown in Table \ref{tab:app:models}.

\begin{table}[h]
    \centering
    \caption{Lookup table of \myname models in the experiments. $\mathsf{bEnt}(x):= x\log x+(1-x)\log(1-x)$ is the binary entropy function and $[\cdot]_+:=\max\{0,\cdot\}$.}
    \scalebox{0.86}{
    \begin{tabular}{c|lll}
        \toprule
        Model          & Primal problem & Dual problem & Constraint \\ \midrule
        \myname-KRR    & $\displaystyle\min_{\btheta}\frac{1}2\|\btheta\|^2+\frac{1}{2\lambda}\sum_{i=1}^n(y_i-\langle\btheta,\bvphi(\x_i)\rangle)^2$ & $\displaystyle\min_{\balpha}\frac{1}{2}\balpha^\T(\K+\lambda\Ibf)\balpha-\y^\T\balpha$ & $-\infty\leq\alpha_i\leq\infty$ \\
        \myname-Huber  & $\displaystyle\min_{\btheta}\frac{1}2\|\btheta\|^2+\frac{1}{2\lambda}\sum_{i=1}^n(y_i-\langle\btheta,\bvphi(\x_i)\rangle)^2$ & $\displaystyle\min_{\balpha}\frac{1}{2}\balpha^\T(\K+\lambda\Ibf)\balpha-\y^\T\balpha$ & $-\dfrac{\delta}{\lambda}\leq\alpha_i\leq\dfrac{\delta}{\lambda}$ \\
        \myname-SVC    & $\displaystyle\min_{\btheta}\frac{1}2\|\btheta\|^2+\frac{1}{2\lambda}\sum_{i=1}^n[1-y_i\langle\btheta,\bvphi(\x_i)\rangle]^2_+$ & $\displaystyle\min_{\balpha}\frac{1}{2}\balpha^\T(\K+\lambda\Ibf)\balpha-\y^\T\balpha$ & $0\leq\alpha_iy_i\leq\infty$ \\
        \myname-SVR    & $\displaystyle\min_{\btheta}\frac{1}2\|\btheta\|^2+\frac{1}{\lambda}\sum_{i=1}^n[|y_i-\langle\btheta,\bvphi(\x_i)\rangle|-\varepsilon]_+$ & $\displaystyle\min_{\balpha}\frac{1}{2}\balpha^\T\K\balpha+\varepsilon\|\balpha\|_1-\y^\T\balpha$ & $-\dfrac{1}{\lambda}\leq\alpha_i\leq\dfrac{1}{\lambda}$ \\
        \myname-KLR    & $\displaystyle\min_{\btheta}\frac{1}2\|\btheta\|^2+\frac{1}{\lambda}\sum_{i=1}^n\log(1+e^{-y_i\langle\btheta,\bvphi(\x_i)\rangle})$ & $\displaystyle\min_{\balpha}\frac{1}{2}\balpha^\T\K\balpha+\frac{1}{\lambda}\sum_{i=1}^n\mathsf{bEnt}(-\lambda\alpha_iy_i)$ & $\displaystyle 0\leq\alpha_iy_i\leq\frac{1}{\lambda}$ \\\bottomrule
    \end{tabular}}
    \label{tab:app:models}
\end{table}

\subsection{Parameter settings}
The major hyperparameters of the models are shown in the experiments in Table \ref{tab:app:param}.
We tend to use the best parameters reported in literature.
Median heuristic is used to set the bandwidth parameter $\sigma$ of kernels,
denoted as ``median'' in Table \ref{tab:app:param}.
Our empirical results show that it always produces satisfying performance,
and its rationality is also supported by \citep{GarreauLarge2018}.

\begin{table}[]
    \centering
    \caption{The major hyperparameters of the models in the experiments. ``NA'' means not applicable.}
    \scalebox{0.75}{
    \begin{tabular}{@{}cl|lllll@{}}
    \toprule
    \multirow{2}{*}{Methods}       & \multicolumn{1}{c}{\multirow{2}{*}{Parameters}} & \multicolumn{5}{c}{Dataset}                                                                                                                                                        \\ \cmidrule(l){3-7} 
                                   & \multicolumn{1}{c}{}                            & \multicolumn{1}{c}{MSD}                 & \multicolumn{1}{c}{HEPC}                & \multicolumn{1}{c}{SUSY} & \multicolumn{1}{c}{HIGGS} & \multicolumn{1}{c}{CIFAR-5M}            \\ \midrule
    \multirow{5}{*}{\falkon}       & $\lambda_{\text{fal}}$                          & $10^{-6}$                               & $10^{-9}$                               & $10^{-6}$                & $10^{-8}$                 & $10^{-8}$                               \\
                                   & $M$                                             & 25000                                   & 25000                                   & 10000                    & 25000                     & 20000                                   \\
                                   & precision                                       & float32                                 & float64                                 & float32                  & float32                   & float32                                 \\
                                   & kernel, $\sigma$                                & Gaussian, 6                             & Gaussian, 4                             & Gaussian, 4              & Gaussian, 5               & Gaussian, median                        \\
                                   & epochs                                          & 20                                      & 50                                      & 20                       & 10                        & 50                                      \\ \midrule
    \multirow{5}{*}{\logfal}       & $\lambda_{\text{lgf}}$                          & \multicolumn{1}{c}{\multirow{5}{*}{NA}} & \multicolumn{1}{c}{\multirow{5}{*}{NA}} & $10^{-9}$                & $10^{-9}$                 & \multicolumn{1}{c}{\multirow{5}{*}{NA}} \\
                                   & $M$                                             & \multicolumn{1}{c}{}                    & \multicolumn{1}{c}{}                    & 25000                    & 25000                     & \multicolumn{1}{c}{}                    \\
                                   & \#Newton step                                   & \multicolumn{1}{c}{}                    & \multicolumn{1}{c}{}                    & 8                        & 8                         & \multicolumn{1}{c}{}                    \\
                                   & kernel, $\sigma$                                & \multicolumn{1}{c}{}                    & \multicolumn{1}{c}{}                    & float32                  & float32                   & \multicolumn{1}{c}{}                    \\
                                   & epochs                                          & \multicolumn{1}{c}{}                    & \multicolumn{1}{c}{}                    & 15                       & 15                        & \multicolumn{1}{c}{}                    \\ \midrule
    \multirow{3}{*}{\eprd}         & $M$                                             & 80000                                   & 80000                                   & 50000                    & $10^5$                    & $10^5$                                  \\
                                   & kernel, $\sigma$                                & Laplacian, median                       & Laplacian, median                       & Laplacian, median        & Laplacian, median         & Laplacian, median                       \\
                                   & epochs                                          & 30                                      & 50                                      & 50                       & 30                        & 50                                      \\ \midrule
    \multirow{3}{*}{\thsvm}        & $c$                                             & 1                                       & 16                                      & 8                        & 32                        & 32                                      \\ 
                                   & kernel, $\sigma$                                & Gaussian, median                        & Gaussian, median                        & Gaussian, median         & Gaussian, median          & Gaussian, median                        \\
                                   & $\epsilon$ in SVR                               & 0.25                                    & 0.25                                    & NA                       & NA                        & NA                                      \\ \midrule
    \multirow{5}{*}{\myname-KRR}   & $\lambda$                                       & 1                                       & $2^{-7}$                                & $2^{-5}$                 & $2^{-7}$                  & $2^{-7}$                                   \\
                                   & $M$                                             & Exact model                             & 50000                                   & $10^5$                    & $10^5$                    & $2\times 10^5$                          \\
                                   & kernel, $\sigma$                                & Gaussian, median                        & Laplacian, median                       & Laplacian, median        & Laplacian, median         & Gaussian, median                        \\
                                   & block size                                      & 2048                                    & 512                                     & 512                      & 512                       & 512                                     \\
                                   & \#iterations                                    & 10000                                   & 25000                                   & 10000                    & 50000                     & 40000                                   \\ \midrule
    \multirow{6}{*}{\myname-Huber} & $\lambda$                                       & 1                                       & $2^{-7}$                                & $2^{-5}$                 & $2^{-7}$                  & $2^{-7}$                                   \\
                                   & $M$                                             & Exact model                             & 50000                                   & $10^5$                    & $10^5$                    & $2\times 10^5$                          \\
                                   & $\delta$                                        & 2                                       & 1                                       & 1                        & 1                         & 1                                       \\
                                   & kernel, $\sigma$                                & Gaussian, median                        & Laplacian, median                       & Laplacian, median        & Laplacian, median         & Gaussian, median                        \\
                                   & block size                                      & 2048                                    & 512                                     & 512                      & 512                       & 512                                     \\
                                   & \#iterations                                    & 10000                                   & 25000                                   & 10000                    & 50000                     & 40000                                   \\ \midrule
    \multirow{6}{*}{\myname-SVM}   & $\lambda$                                       & 1                                       & $2^{-7}$                                & $2^{-5}$                 & $2^{-7}$                  & $2^{-7}$                                   \\
                                   & $M$                                             & Exact model                             & 10000                                   & $10^5$                    & $10^5$                    & $2\times 10^5$                          \\
                                   & kernel, $\sigma$                                & Gaussian, median                        & Laplacian, median                       & Laplacian, median        & Laplacian, median         & Gaussian, median                        \\
                                   & $\epsilon$ in SVR                               & 0.25                                    & 0.25                                    & NA                       & NA                        & NA                                      \\
                                   & block size                                      & 2048                                    & 512                                     & 512                      & 512                       & 512                                     \\
                                   & \#iterations                                    & 10000                                   & 25000                                   & 10000                    & 50000                     & 40000                                   \\ \midrule
    \multirow{5}{*}{\myname-KLR}   & $\lambda$                                       & \multicolumn{1}{c}{\multirow{5}{*}{NA}} & \multicolumn{1}{c}{\multirow{5}{*}{NA}} & 2                        & $2^{-3}$                  & $2^{-7}$                                   \\
                                   & $M$                                             & \multicolumn{1}{c}{}                    & \multicolumn{1}{c}{}                    & $10^5$                    & $10^5$                    & $2\times 10^5$                          \\
                                   & kernel, $\sigma$                                & \multicolumn{1}{c}{}                    & \multicolumn{1}{c}{}                    & Laplacian, median        & Laplacian, median         & Gaussian, median                        \\
                                   & block size                                      & \multicolumn{1}{c}{}                    & \multicolumn{1}{c}{}                    & 1024                     & 1024                      & 1024                                    \\
                                   & \#iterations                                    & \multicolumn{1}{c}{}                    & \multicolumn{1}{c}{}                    & 40000                    & 50000                     & 50000                                   \\ \bottomrule
    \end{tabular}}
    \label{tab:app:param}
\end{table}





\end{document}